\documentclass{article} 
\usepackage{iclr2027_conference,times}

\usepackage{amsmath,amsfonts,bm}

\newcommand{\captiona}{{\em (a)}}
\newcommand{\captionb}{{\em (b)}}
\newcommand{\captionc}{{\em (c)}}
\newcommand{\captiond}{{\em (d)}}

\def\eqref#1{equation~\ref{#1}}

\def\1{\bm{1}}

\DeclareMathAlphabet{\mathsfit}{\encodingdefault}{\sfdefault}{m}{sl}
\SetMathAlphabet{\mathsfit}{bold}{\encodingdefault}{\sfdefault}{bx}{n}

\usepackage{hyperref}
\usepackage{url}
\usepackage{graphicx}
\usepackage{xcolor}
\usepackage{colortbl}
\usepackage{booktabs}
\usepackage{algorithm}
\usepackage{algorithmic}
\usepackage{amsthm}
\usepackage{pifont}
\newtheorem{assumption}{Assumption}
\newtheorem{proposition}{Proposition}
\newtheorem{corollary}{Corollary}
\newcommand{\cmark}{\ding{51}}
\newcommand{\xmark}{\ding{55}}
\usepackage{listings}
\definecolor{codecomment}{rgb}{0.25,0.5,0.5}
\definecolor{codefunc}{rgb}{0.85,0.18,0.50}
\definecolor{codesign}{RGB}{0,0,180}
\lstdefinestyle{demodpo}{
  language=Python,
  basicstyle=\fontencoding{T1}\fontfamily{lmtt}\fontsize{9pt}{9.9pt}\selectfont,
  keywordstyle=\color{blue}\bfseries,
  commentstyle=\color{codecomment},
  stringstyle=\color{orange},
  emph={fm_mse,stopgrad,sample,rand,randn_like,mean,log_sigmoid},
  emphstyle=\color{codefunc},
  frame=none, backgroundcolor=\color{white},
  columns=fullflexible, keepspaces=true,
  showstringspaces=false,
  breaklines=true, breakatwhitespace=true,
  aboveskip=0.6em, belowskip=0.4em,
  literate=
    {*}{{{\color{codesign}*}}}{1}
    {-}{{{\color{codesign}-}}}{1}
    {+}{{{\color{codesign}+}}}{1},
}
\providecommand{\UseTaggingSocket}[1]{}
\providecommand{\SuspendTagging}[1]{}
\providecommand{\ResumeTagging}[1]{}

\title{Latent evolving World Action Model}

\author{%
Xueji Fang$^{1,2,3}$\quad
Boqiang Duan$^{3}$\quad
Hua Wu$^{3}$\quad
\textbf{Jingdong Wang}$^{3,\dagger,\ddagger}$\quad
\textbf{Guo-Jun Qi}$^{2,\dagger}$
}

\iclrfinalcopy 
\begin{document}

\maketitle

\ificlrfinal
\begin{center}
    \vspace{-2.5em}
    \small
    $^{1}$Zhejiang University \hspace{2.5em}
    $^{2}$Westlake University \hspace{2.5em}
    $^{3}$Baidu Inc. \\
\end{center}
\begingroup
\renewcommand{\thefootnote}{\ensuremath{\dagger}}
\begin{NoHyper}
\footnotetext{Corresponding authors \quad $^{\ddagger}$Project leader}
\end{NoHyper}
\endgroup
\fi

\begin{abstract}

World Action Models (WAMs) jointly model action generation and environment dynamics and are mostly built on pretrained Video Diffusion Models (VDMs). 
In VDM-based WAMs, observations are first encoded by a VAE, and the resulting compressed latents are then processed by large video diffusion backbones to extract effective features for action generation.  
However, this paradigm ties WAM performance and training cost to large-scale video generation pretraining, limiting WAM efficiency and scalability.
In this paper, we theoretically and empirically investigate how visual representations affect action generation in WAMs.
Our results show that predictive embeddings from Joint-Embedding Predictive Architecture (JEPA) encoders better support action generation than compressed VAE latents, with I-JEPA performing best in our encoder comparison.
Based on these findings, we propose LeWAM, which conditions action generation on JEPA embeddings and models environment evolution by predicting future embeddings in the same space, without relying on a video diffusion backbone.
We further find that imitation learning matches demonstrated actions but does not distinguish better actions from worse ones, even though small action deviations can greatly affect task success.
To address this limitation without additional environment interaction or the human oversight required for resets and safety, we introduce Demonstration-Guided DPO (DemoDPO), an offline preference refinement stage that derives preference supervision directly from demonstrations.
With only 0.4B trainable parameters, LeWAM achieves an average success rate of 92.28\% on RoboTwin 2.0, comparable to that of state-of-the-art VLAs and WAMs, and maintains practical effectiveness on real-world manipulation tasks.

\end{abstract}

\ificlrfinal
\begingroup
\hypersetup{pdfborder={0 0 0}}
\begin{center}
    \vspace{-1.0em}
    \href{https://github.com/XuejiFang/LeWAM}{%
        \raisebox{-0.18\height}{\includegraphics[height=1.1em]{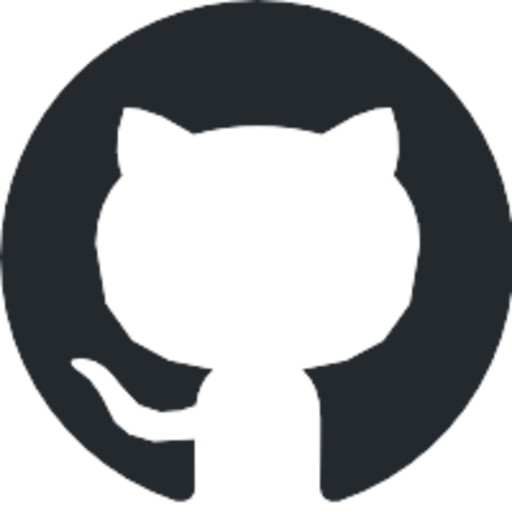}}%
        \hspace{0.3em}\small\textcolor[HTML]{1777BC}{\texttt{github.com/XuejiFang/LeWAM}}%
    }
    \hspace{2em}
    \href{https://huggingface.co/XuejiFang/LeWAM}{%
        \raisebox{-0.18\height}{\includegraphics[height=1.1em]{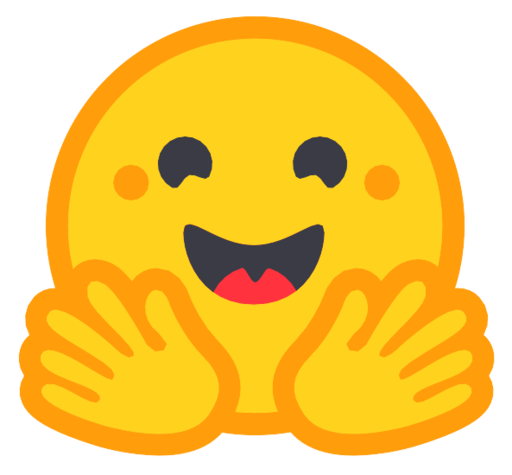}}%
        \hspace{0.3em}\small\textcolor[HTML]{1777BC}{\texttt{huggingface.co/XuejiFang/LeWAM}}%
    }
    \vspace{-0.2em}
\end{center}
\endgroup
\fi

\begin{figure}[!htbp]
    \centering
    \includegraphics[width=\linewidth]{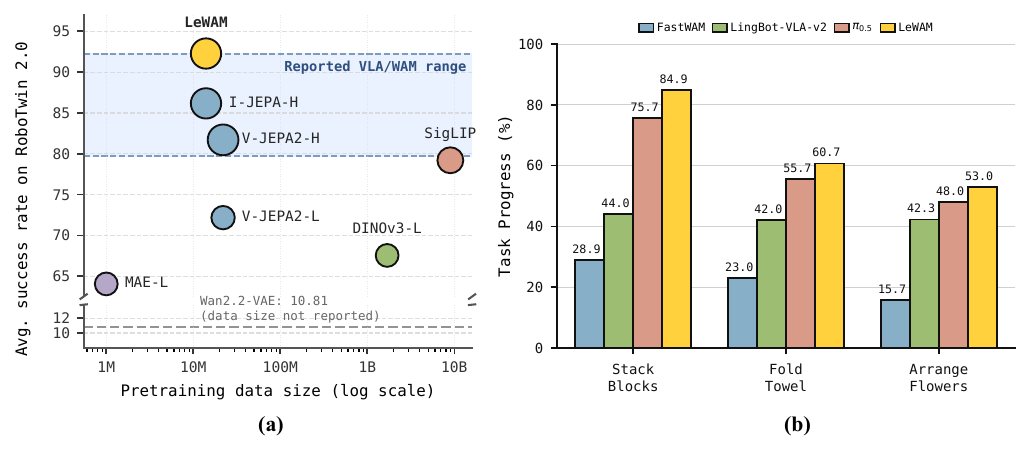}
    \vspace{-2em}
    \caption{
      \captiona~\textbf{Frozen vision encoder comparison and LeWAM performance on RoboTwin 2.0.} Frozen encoders are evaluated with the same simple action predictor architecture and training recipe. I-JEPA-H, pretrained only on ImageNet-22K, performs best among the evaluated encoders, while LeWAM achieves an average success rate of 92.28\%.
     \captionb~\textbf{Real-world task progress across three manipulation tasks.} LeWAM achieves the highest task progress on all three tasks, outperforming $\pi_{0.5}$, LingBot-VLA-v2, and FastWAM.
    }
    \label{fig:main_results}
\end{figure}

\section{Introduction}
\label{sec:introduction}

Vision-Language-Action (VLA) models~\citep{RT-1, RT-H, openvla, llada_vla, long_vla, pi_05, CombatVLA, RTC, giga_brain, traj2act} have become a widely used paradigm for generalist robotic policies, mapping visual observations and language instructions directly to actions via behavioral cloning. By leveraging large pretrained vision-language backbones, VLAs generalize across language instructions, but they do not explicitly model how the physical world evolves under actions. World Action Models (WAMs)~\citep{DreamZero, gigaworld-policy, dit4dit, fast-wam} address this limitation by coupling action prediction with visual dynamics learning, often using Video Diffusion Model (VDM) backbones to model how actions change the physical scene. These WAMs provide additional supervision for embodied policies and report competitive results on manipulation tasks. 

Despite these advances, recent WAMs reveal a tension between how they are trained and deployed. Some WAMs retain explicit joint video and action generation at inference time~\citep{DreamZero, motus, dit4dit}, but action-centered WAMs increasingly remove future visual generation during deployment~\citep{gigaworld-policy, fast-wam}. These WAMs still benefit from video supervision during training, but at test time often generate actions directly from the current observation and task instruction. At deployment, they use VLA-style inference, predicting actions directly without forecasting future state changes. For VDM-based WAMs, the visual representation can therefore become a bottleneck for action generation: compressed VAE posterior means can be poorly suited to lightweight action predictors, and extracting useful features from them relies on a large pretrained video diffusion backbone, limiting efficiency and scalability.

Prior studies provide evidence on visual representations for action generation, but their settings are not directly comparable. 
Existing encoder comparisons cover a limited set of representations or evaluate them as additions to VLA features~\citep{GR-1, dinov3_diffusion_policy, vpp, jepa_vla, DynaMind}. 
Existing latent world models use predictive embeddings for goal-conditioned planning rather than direct action generation~\citep{vjepa2, le_world_model}. 
Existing VLAs and VDM-based WAMs rely on different visual pipelines, using pretrained VLM features and VAE latents processed by video diffusion backbones, respectively~\citep{openvla, pi_0, pi_05, llada_vla, x-vla, DreamZero, motus, dit4dit, fast-wam, gigaworld-policy}. 
None of these studies compares representative frozen encoders learned through different pretraining objectives as direct inputs to the same action predictor.
These objectives include reconstruction~\citep{mae,cae}, self-supervised feature learning~\citep{dino,Aet,Avt}, vision-language alignment~\citep{siglip}, and image or video prediction~\citep{ijepa,vjepa2,CAEv2}.
Such a comparison is necessary to isolate the contribution of the visual representation from the policy and training procedure.

We therefore analyze WAMs built on VDMs from the perspective of information theory and compare representative frozen encoders under the same training recipe. Our analysis separates the action information a representation retains from the part a downstream model can use, while the controlled comparison shows that compressed VAE latents are poorly suited to direct action generation and that I-JEPA performs best among the evaluated encoders.
These findings motivate LeWAM, a \textbf{L}atent \textbf{e}volving \textbf{W}orld \textbf{A}ction \textbf{M}odel that performs both action generation and world modeling in JEPA embedding space. To jointly learn action generation and future embedding prediction conditioned on actions, LeWAM organizes the current embedding, noisy and demonstrated action tokens, and future embedding queries in a unified token sequence and controls the information flow with a structured attention mask. This design allows both objectives to be learned by a single predictor without information leakage. To further exploit the information distributed across the frozen encoder, we introduce AdaFuse, which adaptively combines  representations from different encoder layers for each feature dimension. At inference time, LeWAM removes the demonstrated actions and future queries and generates actions only from the current observation and task context.

\begin{figure}[htpb]
    \centering
    \includegraphics[width=\linewidth]{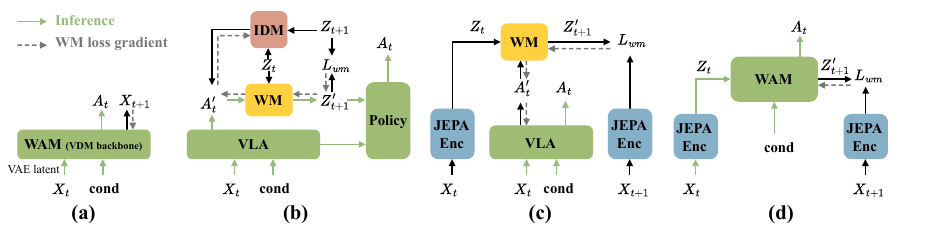}
    \vspace{-1.5em}
      \caption{
          Comparison of WAM architectures.
        \textbf{(a) Action-centered VDM-based WAMs} use compressed VAE latents as visual inputs and jointly learn action generation and future frame prediction, but generate only actions at inference.
        \textbf{(b) LaWAM}~\citep{lawam} predicts latent actions from VLM representations and uses a separate world model to predict future DINOv3 embeddings, which condition action generation at inference.
        \textbf{(c) VLA-JEPA}~\citep{vla_jepa} generates actions from VLM representations and uses latent action tokens to condition a separate world model supervised by V-JEPA2 embeddings during training.
        \textbf{(d) LeWAM} ses frozen I-JEPA embeddings as both visual inputs for action generation and targets for action-conditioned future prediction, learning both tasks with a single predictor while generating only actions at inference.
          }
    \label{fig:wam_architectures}
\end{figure}

We observe that LeWAM generates actions close to demonstrations on the validation set, yet small deviations during interaction with the environment can cause task failure. We further find that allowing the policy multiple attempts increases its success rate. This suggests that successful action trajectories lie within the policy distribution but receive insufficient probability mass.
We therefore aim to shift the policy distribution toward these trajectories. A natural approach is to maximize expected return through online reinforcement learning with rollout rewards~\citep{dppo, flow_policy_gradients}.
However, on physical robots, this requires repeated environment interaction and human oversight for resets and safety, making it difficult to scale~\citep{leave_no_trace, autonomous_rl, ariel, risc}. To avoid this requirement, we introduce Demonstration-Guided DPO (DemoDPO) as the offline preference refinement stage of LeWAM.
DemoDPO uses demonstrations to construct preferences among action candidates from a frozen reference policy, providing offline preference supervision without additional environment interaction or manually designed rewards.

Our contributions are summarized as follows:
\begin{itemize}
    \item We theoretically and empirically study visual representations for action generation in WAMs, identify I-JEPA as an effective visual encoder for action generation, and further examine its representations through attention visualization and probing.
    
    \item We propose LeWAM, which combines AdaFuse with joint action generation and future embedding prediction in a lightweight model, while generating only actions at inference.

    \item We refine LeWAM with DemoDPO, using demonstrations to provide preference supervision without additional environment interaction or manually designed rewards.
\end{itemize}

\section{Related Work}
\label{sec:related_work}

\textbf{Visual Representations and World Modeling for Action Generation.}
Diffusion Policy~\citep{diffusion_policy} and Octo~\citep{octo} generate action sequences through conditional denoising without an explicit world model.
VDM-based WAMs typically couple action learning with future video modeling through pretrained video diffusion backbones, as illustrated in Figure~\ref{fig:wam_architectures}\captiona{}~\citep{DreamZero, motus, dit4dit, fast-wam, gigaworld-policy}.
Recent methods instead predict future states in pretrained representation spaces.
VLA-JEPA uses latent action tokens produced by a VLM to condition a separate world model trained against frozen V-JEPA2 targets~\citep{vla_jepa}.
The flow matching action head is conditioned on VLM representations, whereas V-JEPA2 embeddings supervise only dynamics learning and do not serve as direct visual inputs for action generation, as shown in Figure~\ref{fig:wam_architectures}\captionc{}.
LaWAM first learns a latent action model in a frozen DINOv3 space and then factorizes control into a VLA that predicts a latent action, a separate latent world model that decodes it into a future visual subgoal, and an action expert conditioned on that subgoal~\citep{lawam}.
Future prediction therefore remains an explicit intermediate at deployment, as shown in Figure~\ref{fig:wam_architectures}\captionb{}.
LeWAM instead uses frozen I-JEPA embeddings as both the direct visual input for action generation and the target for future prediction.
A single predictor jointly generates actions and predicts future embeddings conditioned on demonstrated action prefixes during training, while future prediction is removed at deployment, as shown in Figure~\ref{fig:wam_architectures}\captiond{}.

\textbf{Robot Policy Refinement beyond Imitation.}
Online reinforcement learning has been used to refine diffusion and flow matching policies using rewards collected through environment interaction~\citep{dppo, flow_policy_gradients}.
However, applying these methods to physical robots requires repeated environment rollouts, reliable task rewards, and human oversight for resets and safety, making them difficult to scale across tasks~\citep{leave_no_trace, autonomous_rl, ariel, risc}.
Preference-based methods provide an alternative, but existing approaches derive supervision from predefined task rewards or reward-based evaluators~\citep{pao_dp, fkpd}, model-generated evaluation criteria~\citep{grape}, or direct human feedback, including pairwise comparisons, interventions, and manually specified behavior preferences~\citep{fdpp, apo, rodif}.
These signals still require additional supervision or interaction beyond the original offline demonstrations.
DemoDPO instead uses each existing demonstration to rank a group of candidates sampled from a frozen reference policy and filters groups without sufficiently distinct preferences.
It therefore enables offline preference refinement without additional environment interaction, reward supervision, or human preference annotation.
\section{Representation-Guided Latent World Action Modeling}
\label{sec:method}

In this section, we analyze visual representations for action generation and the limitations of VDM-based WAMs.
We then present LeWAM and its DemoDPO preference refinement stage.

\subsection{Problem Formulation}
\label{ssec:problem_formulation}

We study visuomotor policy learning from expert demonstrations.
At time $t$, the policy maps a visual observation $X_t$ and task context $c_t$ to an expert action chunk
$A_t=(a_t,\ldots,a_{t+H_{\mathrm{a}}-1})$, where $H_{\mathrm{a}}$ is the number of control steps per generation pass.
LeWAM parameterizes the conditional policy $\pi_\theta(A_t\mid X_t,c_t)$ using flow matching \cite{FlowMatching}.
World Action Models augment this objective by modeling environment evolution over the same horizon.
We select $N$ future offsets $0<\delta_1<\cdots<\delta_N\leq H_{\mathrm{a}}$ and define
$Z_i=E(X_{t+\delta_i})$, where the visual representation function $E$ produces
$Z_i\in\mathbb{R}^{hw\times d}$ as an $h\times w$ grid of $d$-dimensional patch embeddings.
LeWAM jointly generates $A_t$ and predicts $\{Z_i\}_{i=1}^N$, as detailed in Section~\ref{ssec:lewam_design}.
Below, we omit time indices and let $X$, $A$, $c$, and $Z=E(X)$ denote random variables drawn from the demonstrations.
Unless stated otherwise, all probabilities, expectations, and information quantities refer to their joint distribution.

\subsection{Action-Sufficient World Modeling}
\label{ssec:action_sufficient_latents}

We analyze how a visual representation affects action uncertainty.
\begin{assumption}[Action sufficiency]\label{as:sufficiency}
The factors $S=S(X)$ form a sufficient statistic for the action given the context, that is, $p(A\mid X,c)=p(A\mid S,c)$.
\end{assumption}

\begin{proposition}[Action relevance gap]\label{prop:gap}
Under Assumption~\ref{as:sufficiency}, let $Z=E(X)$, where $E$ is deterministic.
When the conditional action entropies are finite, the additional action uncertainty satisfies
\begin{equation}
    \Delta_E(A)
    :=
    H(A\mid Z,c)-H(A\mid X,c)
    =
    I(A;X\mid Z,c)
    =
    I(A;S\mid Z,c)
    \geq 0,
    \label{eq:action_factor_gap}
\end{equation}
where $H$ denotes entropy for discrete actions and differential entropy for continuous actions, and $I$ denotes conditional mutual information.
\end{proposition}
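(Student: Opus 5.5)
The plan is to write everything in terms of conditional entropies given $c$ and to exploit the fact that both $Z=E(X)$ and $S=S(X)$ are deterministic functions of $X$. Finiteness of the conditional action entropies makes every difference below well defined, with no $\infty-\infty$ issues.

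\textbf{First equality.} Since $Z$ is a function of $X$, conditioning on $(X,Z,c)$ is the same as conditioning on $(X,c)$, so $H(A\mid X,Z,c)=H(A\mid X,c)$. The definition of conditional mutual information then gives
\begin{equation*}
I(A;X\mid Z,c)=H(A\mid Z,c)-H(A\mid X,Z,c)=H(A\mid Z,c)-H(A\mid X,c)=\Delta_E(A).
\end{equation*}
In the continuous case, $H$ is differential entropy of $A$ and the identity is the standard one for conditional mutual information. To stay safe when $X$ has no density, I would use the general definition $I(A;X\mid Z,c)=\mathbb{E}\bigl[D_{\mathrm{KL}}(p(A\mid X,Z,c)\,\|\,p(A\mid Z,c))\bigr]$. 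Expanding the log ratio under the finiteness assumption recovers the entropy difference, so only the conditional law of $A$ ever needs a density.

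\textbf{Second equality.} The same argument applied to $S$ gives $I(A;S\mid Z,c)=H(A\mid Z,c)-H(A\mid S,Z,c)$. It therefore suffices to show $H(A\mid S,Z,c)=H(A\mid X,c)$, which I would obtain by sandwiching. On one side, conditioning reduces entropy, and $(S,Z)$ is a function of $X$, so $H(A\mid S,Z,c)\ge H(A\mid X,c)$. On the other side, Assumption~\ref{as:sufficiency} says $p(A\mid X,c)=p(A\mid S,c)$. Because $(S,Z)$ is $\sigma(X)$-measurable, the tower property gives
\begin{equation*}
p(A\mid S,Z,c)=\mathbb{E}\bigl[p(A\mid X,c)\mid S,Z,c\bigr]=p(A\mid S,c)=p(A\mid X,c).
\end{equation*}
The conditional laws agree almost surely, so the conditional entropies are equal. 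An equivalent route is the chain rule $I(A;X\mid Z,c)=I(A;S\mid Z,c)+I(A;X\mid S,Z,c)$, with the last term vanishing by sufficiency.

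\textbf{Nonnegativity.} This follows from $I\ge 0$, since the KL divergence is nonnegative, and it holds for differential entropies as well.

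\textbf{Main obstacle.} The only delicate step is the measure-theoretic justification in the continuous case: defining conditional mutual information via KL divergence and the tower-property argument so that no density for $X$ is required. The discrete case is routine.
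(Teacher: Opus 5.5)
Your proposal is correct and essentially matches the paper's proof. The first equality is argued identically. The paper obtains the second via the chain rule $I(A;X\mid Z,c)=I(A;S\mid Z,c)+I(A;X\mid S,Z,c)$, with the last term vanishing by sufficiency; this is the alternative route you mention, and it rests on the same fact as your tower-property identity $p(A\mid S,Z,c)=p(A\mid X,c)$. The lower half of your sandwich is unnecessary, because that identity of conditional laws already gives $H(A\mid S,Z,c)=H(A\mid X,c)$ directly.
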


A useful representation for action generation should retain the information in $S$ relevant to $A$.

We next examine VAE compression and the reliance on large video diffusion backbones in VDM-based WAMs.
First, a linear Gaussian analysis makes explicit how VAE training can suppress observation directions in the posterior mean~\citep{wang2022posterior}.

\begin{proposition}[VAE spectral compression]\label{prop:vae}
For fixed $c$, consider the linear Gaussian VAE in Appendix~\ref{app:vae_spectral_compression}, optimized to a global minimum with latent width $k$, KL weight $\beta_{\mathrm{vae}}>0$, and fixed decoder variance $\sigma_{\mathrm{dec}}^2>0$.
Let $(\lambda_i,u_i)$ denote the observation covariance eigenpairs in decreasing eigenvalue order, and set $\lambda_{\mathrm{cut}}=\beta_{\mathrm{vae}}\sigma_{\mathrm{dec}}^2$.
The posterior mean $Z_{\mathrm{vae}}$ has effective dimension
\begin{equation}
    r_{\mathrm{eff}}
    :=
    \operatorname{rank}\operatorname{Cov}(Z_{\mathrm{vae}}\mid c)
    =
    \min\bigl(k,\#\{i:\lambda_i>\lambda_{\mathrm{cut}}\}\bigr).
    \label{eq:method_vae_effective_dimension}
\end{equation}
Let $J=\{i\leq k:\lambda_i>\lambda_{\mathrm{cut}}\}$.
If $A=B_cX+\xi$, where $\xi$ is independent Gaussian noise with positive definite covariance, then $\Delta_E(A)>0$ exactly when $B_cu_i\neq0$ for some $i\notin J$.
\end{proposition}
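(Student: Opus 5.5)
We model the linear Gaussian VAE in the standard probabilistic-PCA form: for fixed $c$, the centered observation $X$ has covariance $\Sigma=\sum_i\lambda_i u_iu_i^\top$. The encoder is $q(z\mid x)=\mathcal N(Vx+b,D)$ with $z\in\mathbb R^k$, and the decoder is $p(x\mid z)=\mathcal N(Wz+\mu,\sigma_{\mathrm{dec}}^2 I)$. The objective is the $\beta_{\mathrm{vae}}$-weighted ELBO.

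\textbf{Step 1: characterize the global minimum.} Because every term is Gaussian, the expected loss has a closed form in $(W,V,D)$. Setting the derivatives to zero and analyzing stationary points (following \citet{wang2022posterior}, or the Tipping--Bishop analysis with $\sigma^2$ replaced by $\beta_{\mathrm{vae}}\sigma_{\mathrm{dec}}^2$) gives the global minimizers up to a rotation $R$ of latent space:
\begin{equation*}
W=U_k\bigl(\max(\Lambda_k-\lambda_{\mathrm{cut}}I,0)\bigr)^{1/2}R .
\end{equation*}
The posterior-mean map takes the form $V=M\,U_k^\top$, where $M$ is diagonal in the eigenbasis, with entry $m_i\neq 0$ iff $\lambda_i>\lambda_{\mathrm{cut}}$, and $m_i=0$ otherwise (these are posterior-collapsed latents). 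The main obstacle is this step. We must show that no other stationary configuration attains a lower loss, and that the collapsed directions carry exactly zero mean weight rather than just a small weight. I would first decouple the loss along eigen-directions after the rotation, reducing it to $k$ scalar problems of the form $\min_{w}\,\lambda_i$-dependent quadratics with a $\log$ variance term. Each scalar problem has the nonzero solution iff $\lambda_i>\lambda_{\mathrm{cut}}$. A separate argument is needed to show that each active latent pairs with a top eigenvector: assigning a latent to a smaller $\lambda_i$ strictly increases the loss.

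\textbf{Step 2: effective dimension.} Given this form, $Z_{\mathrm{vae}}=VX+b=\sum_{i\in J} m_i\,(u_i^\top X)\,e_i$, rotated by $R$. Since $u_i^\top X$ are uncorrelated with variances $\lambda_i>0$, we get $\operatorname{Cov}(Z_{\mathrm{vae}}\mid c)=R^\top\operatorname{diag}(m_i^2\lambda_i)_{i\le k}R$. Its rank is $|J|=\min(k,\#\{i:\lambda_i>\lambda_{\mathrm{cut}}\})$, which proves \eqref{eq:method_vae_effective_dimension}.

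\textbf{Step 3: the action gap.} Write $P_J=\sum_{i\in J}u_iu_i^\top$. The variable $Z_{\mathrm{vae}}$ is an invertible linear function of $P_JX$ (restricted to its span), so it generates the same $\sigma$-algebra as $P_JX$. By Proposition~\ref{prop:gap}, $\Delta_E(A)=H(A\mid Z,c)-H(A\mid X,c)$. All variables are jointly Gaussian given $c$, so the differential entropies are $\tfrac12\log\det$ of conditional covariances:
\begin{align*}
\operatorname{Cov}(A\mid X,c) &= \Sigma_\xi,\\
\operatorname{Cov}(A\mid Z,c) &= \Sigma_\xi+B_c\,\operatorname{Cov}(X\mid P_JX)\,B_c^\top .
\end{align*}
Here $\operatorname{Cov}(X\mid P_JX)=\sum_{i\notin J}\lambda_iu_iu_i^\top$, using that the eigen-coordinates are independent Gaussians (with $X$ Gaussian in the linear model). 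Hence
\begin{equation*}
\Delta_E(A)=\tfrac12\log\det\bigl(I+\Sigma_\xi^{-1/2}B_c\,\textstyle\sum_{i\notin J}\lambda_iu_iu_i^\top\,B_c^\top\Sigma_\xi^{-1/2}\bigr).
\end{equation*}
Since $\Sigma_\xi\succ0$, this quantity is strictly positive iff the PSD matrix $\sum_{i\notin J}\lambda_i (B_cu_i)(B_cu_i)^\top$ is nonzero. Every $\lambda_i$ in this sum is positive, taking the covariance to be full rank or else restricting to its support. So the matrix is nonzero iff $B_cu_i\neq0$ for some $i\notin J$, which gives the final claim.
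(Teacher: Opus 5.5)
Your proposal follows the paper's proof essentially step for step: like the paper, you use the alignment result of \citet{wang2022posterior} to decouple the global optimum into per-eigendirection scalar problems whose mean coefficient is nonzero exactly when $\lambda_i>\lambda_{\mathrm{cut}}$, with the top $k$ eligible directions retained; you then read off the rank of $\operatorname{Cov}(Z_{\mathrm{vae}}\mid c)$, and use $\operatorname{Cov}(X\mid Z_{\mathrm{vae}},c)=\sum_{i\notin J}\lambda_iu_iu_i^\top$ to obtain the log-determinant gap, which is positive iff $B_cu_i\neq0$ for some $i\notin J$. The extra bias terms and latent rotation $R$ you carry change neither the rank nor the conditional covariance, so the argument is the paper's.
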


Proposition~\ref{prop:vae} and Corollary~\ref{cor:misalign} show that reconstruction based selection need not retain action-relevant directions.
Appendix~\ref{app:usable_action_uncertainty} further distinguishes information loss from the difficulty of extracting retained information with a fixed predictor~\citep{xu2020usable}.
This distinction motivates our controlled comparison of frozen representations for action generation in Section~\ref{ssec:frozen_encoder_comparison}.

Second, VDM-based WAMs rely on a large video diffusion backbone pretrained at scale for useful visual features. Indeed, \citet{wang2026genception} show that such a backbone can be repurposed into a strong feedforward perception model. This result demonstrates the value of video generation pretraining, but does not establish the effectiveness of raw VAE latents as direct representations for action generation. The diffusion backbone extracts the information retained in $Z_{\mathrm{vae}}$ and exploits learned visual priors.
Section~\ref{ssec:frozen_encoder_comparison} evaluates whether a lightweight predictor on a frozen JEPA encoder matches these systems without such a backbone.

\subsection{World Action Modeling in Representation Space}
\label{ssec:lewam_design}
\begin{figure}[htpb]
    \centering
    \includegraphics[width=\linewidth]{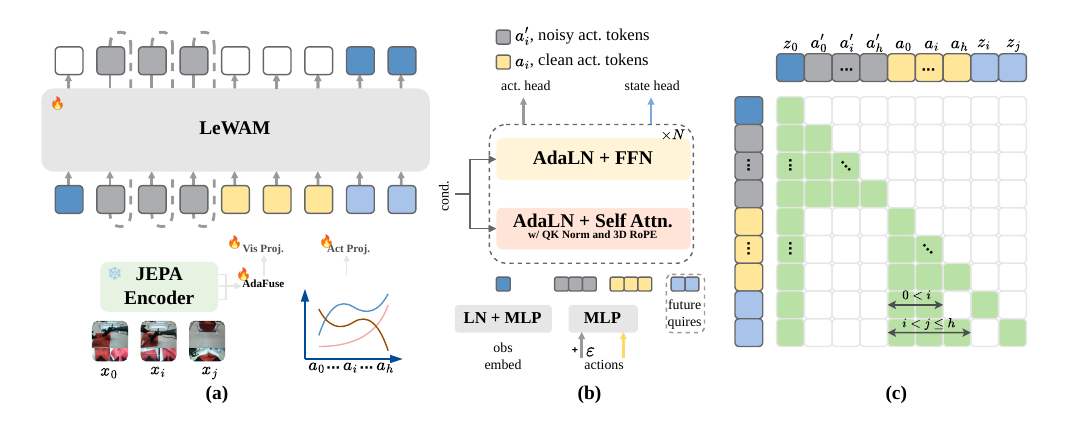}
    \vspace{-1.5em}
    \caption{
      LeWAM architecture for joint action generation and future embedding prediction.
      \captiona~A frozen JEPA encoder extracts multilayer embeddings, and AdaFuse adaptively combines them.
      \captionb~The predictor combines observation and action tokens with future queries to predict action velocities and future embeddings.
      \captionc~The mask prevents clean actions from leaking into action prediction and restricts each future query to the action prefix for its randomly sampled horizon.
  }
    \label{fig:method_overview}
\end{figure}

Figure~\ref{fig:method_overview} shows how LeWAM learns an action policy and a representation space world model within a single predictor:
\begin{equation}
    \pi_\theta(A_t\mid X_t,c_t),
    \qquad
    p_\theta\!\left(Z_i\mid X_t,A_t^{(\delta_i)},c_t\right),
    \quad
    i=1,\ldots,N,
\end{equation}
where $A_t^{(\delta_i)}=(a_t,\ldots,a_{t+\delta_i-1})$ is the clean action prefix leading to $X_{t+\delta_i}$ and $Z_i=E(X_{t+\delta_i})$.
The first conditional generates an action chunk from the current observation, while the second predicts how the visual embedding evolves under the demonstrated action prefix.
Given a training segment starting at time $t$, let $E_{\ell}(X)\in\mathbb{R}^{hw\times d}$ denote the patch embeddings from layer $\ell$ of a frozen JEPA encoder with $L$ layers.
AdaFuse learns logits $\Gamma\in\mathbb{R}^{L\times d}$, normalizes them across layers separately for each feature dimension, and computes
\begin{equation}
    F_{\mathrm{Ada}}(X)
    =
    \operatorname{LN}_{\mathrm{out}}
    \left(
        \sum_{\ell=1}^{L}
        w_{\ell}\odot
        \operatorname{LN}_{\mathrm{in}}\!\left(E_{\ell}(X)\right)
    \right),
    \qquad
    w_{\ell,k}
    =
    \frac{\exp(\Gamma_{\ell,k})}
    {\sum_{r=1}^{L}\exp(\Gamma_{r,k})},
\end{equation}
where $\odot$ denotes elementwise multiplication, and $\operatorname{LN}_{\mathrm{in}}$ and $\operatorname{LN}_{\mathrm{out}}$ apply LayerNorm before and after fusion.
The channelwise fusion weights are shared across patch tokens and observations.
We use AdaFuse for the current observation and obtain future targets directly from the frozen encoder: $Z_0=F_{\mathrm{Ada}}(X_t)$ and $Z_i=E(X_{t+\delta_i})$, where $E$ denotes the frozen encoder.
For conditional flow matching, we sample $\tau\sim\operatorname{Unif}[0,1]$ and form
\begin{equation}
    A'_{\tau}=(1-\tau)A_t+\tau\epsilon,
    \qquad
    \epsilon\sim\mathcal{N}(0,\mathbf{I}),
    \qquad
    v_{\tau}=\epsilon-A_t.
    \label{eq:action_interpolation}
\end{equation}

To jointly model action generation and future embedding prediction, LeWAM processes the token sequence $[Z_0,A'_{\tau},A_t,Q_{1:N}]$ with a single predictor.
Each $Q_i$ is formed by repeating a learned query token over a patch grid matching $Z_0$ and adding learned position embeddings.
Each attention layer uses QK normalization and 3D RoPE with one temporal and two spatial coordinates.
$Z_0$ and $Q_i$ use their spatiotemporal positions, whereas noisy and clean action tokens use their temporal positions with both spatial coordinates set to $-1$.
A structured attention mask allows each noisy action token to attend only to $Z_0$ and its causal noisy action prefix, makes the clean action tokens causal, and allows $Q_i$ to attend only to $Z_0$, its own query grid, and the first $\delta_i$ clean actions.
This prevents clean actions from leaking into action generation and actions after $\delta_i$ from affecting the prediction of $Z_i$, either directly or through earlier clean action states.
The causal noisy action stream also supports different action horizons at inference.
Under this mask, the noisy action stream realizes $\pi_\theta(A_t\mid X_t,c_t)$, while the future query stream realizes $p_\theta(Z_i\mid X_t,A_t^{(\delta_i)},c_t)$.

Separate heads map the hidden states of the noisy action tokens to the flow velocity $\hat{v}_{\theta}$ and those of the future query tokens to the predicted future embeddings $\hat{Z}_{1:N}$.
We optimize both objectives with
\begin{equation}
    \mathcal{L}_{\mathrm{act}}
    =
    \mathbb{E}\!\left[\left\|\hat{v}_{\theta}-v_{\tau}\right\|_2^2\right],
    \qquad
    \mathcal{L}_{\mathrm{wm}}
    =
    \mathbb{E}_{i}\!\left[
    \left\|\hat{Z}_{i}-\mathrm{sg}(Z_{i})\right\|_2^2
    \right],
    \label{eq:lewam_losses}
\end{equation}
where $\mathbb{E}_{i}$ averages over the $N$ future targets and $\mathrm{sg}$ denotes stop gradient.
The joint objective is $\mathcal{L}_{\mathrm{LeWAM}}=\mathcal{L}_{\mathrm{act}}+\alpha\mathcal{L}_{\mathrm{wm}}$, where $\alpha$ controls the contribution of future embedding prediction.

\paragraph{Inference.}
At inference, LeWAM encodes $X_t$ into $Z_0$ and denoises an action chunk of the desired length conditioned on $Z_0$ and $c_t$.
Clean action tokens and future queries are omitted, so future embedding prediction provides training supervision without additional inference cost.

\subsection{Demonstration-Guided DPO}
\label{ssec:demodpo}

We observe that LeWAM can generate action chunks that closely match successful trajectories, yet small action deviations can still cause a rollout to fail.
This exposes a limitation of supervised imitation learning: flow matching learns the demonstrated action distribution but provides no explicit signal that distinguishes better generated actions from worse ones.
We therefore refine LeWAM with Demonstration-Guided DPO (DemoDPO), which uses each demonstrated action chunk to rank a group of candidates sampled from a frozen reference policy.
We initialize the trainable policy $\pi_\theta$ and a frozen reference policy $\pi_{\mathrm{ref}}$ from the supervised LeWAM checkpoint.
For each training condition $x_t=(X_t,c_t)$ and demonstrated action chunk $A_t$, we draw $G$ candidates $\widetilde{A}_i\overset{\mathrm{i.i.d.}}{\sim}\pi_{\mathrm{ref}}(\cdot\mid x_t)$ from independent Gaussian noise.
We measure their similarity to the demonstration using action MSE, denoted by $d_i$, and define the candidates with the smallest and largest MSE as $A^w$ and $A^l$, respectively.
The demonstration therefore provides the ranking signal, while both candidates are sampled from the same frozen reference policy.
Let $i^w$ and $i^l$ be the indices of $A^w$ and $A^l$, respectively, and let $\varepsilon_{\mathrm{pair}}$ denote the minimum MSE gap.
We define the pair retention mask as
\begin{equation}
    M_{\mathrm{pair}}
    =
    \mathbf{1}
    \left\{
    d_{i^l}-d_{i^w}\geq\varepsilon_{\mathrm{pair}}
    \right\}.
    \label{eq:demodpo_pair_mask}
\end{equation}
We retain only pairs with $M_{\mathrm{pair}}=1$, filtering out groups whose best and worst candidates have nearly indistinguishable demonstration errors.

For a preference pair $(A^w,A^l)$, standard DPO~\citep{dpo} minimizes
\begin{equation}
    \mathcal{L}_{\mathrm{DPO}}
    =
    -\mathbb{E}
    \left[
    \log\sigma
    \left(
    \beta
    \left[
    \rho_\theta(A^w\mid x_t)
    -
    \rho_\theta(A^l\mid x_t)
    \right]
    \right)
    \right],
    \qquad
    \rho_\theta(A\mid x_t)
    =
    \log\frac{\pi_\theta(A\mid x_t)}{\pi_{\mathrm{ref}}(A\mid x_t)},
    \label{eq:standard_dpo}
\end{equation}
where the expectation is over preference pairs, $\sigma$ is the sigmoid function, $\beta$ is the DPO regularization coefficient, and $\rho_\theta(A\mid x_t)$ is the log policy ratio of the trainable policy to the frozen reference policy.
A larger $\rho_\theta(A\mid x_t)$ means that the trainable policy assigns higher likelihood to $A$ relative to the reference policy.
For a flow matching policy, $\rho_\theta$ cannot be evaluated directly from the velocity predictor.
Diffusion-DPO~\citep{diffusion_dpo} derives a tractable surrogate objective for diffusion models from denoising errors at a sampled noise level, while Flow-DPO~\citep{flowdpo} adapts this objective to flow matching policies using velocity prediction errors.
We follow this construction by sampling $(\tau,\epsilon)$ independently of the candidate generation noises $\{\eta_i\}_{i=1}^{G}$ and applying Equation~\ref{eq:action_interpolation} to $A^{w}$ and $A^{l}$ with the same $(\tau,\epsilon)$.
For $\phi\in\{\theta,\mathrm{ref}\}$ and $A\in\{A^w,A^l\}$, let $A'_\tau$ denote the resulting noisy action.
We define the mean squared flow matching error as
\begin{equation}
    \bar{\ell}_\phi(A\mid x_t;\tau,\epsilon)
    =
    \frac{1}{H_{\mathrm{a}}D_a}
    \left\|
        \hat{v}_\phi(A'_\tau,x_t,\tau)
        -
        (\epsilon-A)
    \right\|_2^2,
    \label{eq:demodpo_flow_error}
\end{equation}
where $D_a$ is the action dimension.
The corresponding tractable surrogate for the log policy ratio is
\begin{equation}
    s_\theta(A\mid x_t;\tau,\epsilon)
    =
    -\frac{1}{2}
    \left[
    \bar{\ell}_\theta(A\mid x_t;\tau,\epsilon)
    -
    \bar{\ell}_{\mathrm{ref}}(A\mid x_t;\tau,\epsilon)
    \right].
    \label{eq:demodpo_ratio_score}
\end{equation}

Substituting the surrogate $s_\theta$ for $\rho_\theta$ in Equation~\ref{eq:standard_dpo} gives
\begin{equation}
    \mathcal{L}_{\mathrm{DemoDPO}}
    =
    -
    \mathbb{E}
    \left[
    \log\sigma
    \left(
    \beta
    \left[
    s_\theta(A^w\mid x_t;\tau,\epsilon)
    -
    s_\theta(A^l\mid x_t;\tau,\epsilon)
    \right]
    \right)
    \,\middle|\,
    M_{\mathrm{pair}}=1
    \right]
    .
    \label{eq:demodpo_objective}
\end{equation}
The expectation is over candidate groups sampled from the frozen reference policy and the shared flow noise, conditioned on $M_{\mathrm{pair}}=1$.
The loss encourages the trainable policy to assign a higher relative score to the candidate with lower demonstration MSE than to the candidate with higher demonstration MSE.
Appendix~\ref{app:demodpo_derivation} derives this objective from the Diffusion-DPO denoising surrogate and the Flow-DPO velocity formulation, and Algorithm~\ref{alg:demodpo} summarizes one update step.

\section{Experiments}
\label{sec:experiments}

\subsection{Experimental Setup}
\label{ssec:exp_setup}
\textbf{Simulation and real-world settings.}
We evaluate LeWAM on both simulated and real-world manipulation tasks.
For simulation, we use RoboTwin 2.0, a bimanual robotic manipulation benchmark comprising 50 tasks, following the multitask training setup of recent RoboTwin evaluations~\citep{gigaworld-policy, fast-wam}.
The training data contain 2,500 clean and 25,000 heavily randomized demonstrations, corresponding to 50 and 500 demonstrations per task, respectively.
We train all policies in our controlled comparisons for 10 epochs using AdamW with a learning rate of $10^{-4}$ and a global batch size of 1024 on 16 NVIDIA H800 GPUs.
Following Fast-WAM, we use an action chunk of 32 steps in simulation.
For future embedding prediction, we uniformly sample one future offset from steps 1 to 32 for each training sample and set $\alpha=0.1$.
For DemoDPO, we sample $G=4$ candidates from the frozen reference policy using 10 explicit Euler steps and retain pairs satisfying $d_{i^l}-d_{i^w}\geq10^{-3}$.
We train DemoDPO for 1,000 optimization steps with a learning rate of $10^{-5}$, $\beta=5000$, and an EMA decay of $0.9995$.
In the real world, we evaluate three bimanual tasks on an AgileX dual Piper robot: stacking three blocks, folding a towel, and arranging three flowers in a vase.

\textbf{Evaluation protocol.}
Simulation performance is measured by task success rate.
For the real-world evaluation, each method is evaluated for 50 rollouts per task, and we report task progress measured by milestone completion.
The complete real-world experiment details are provided in Appendix~\ref{app:real_world_protocol}.
Table~\ref{tab:inference_efficiency} in Appendix~\ref{app:inference_efficiency} reports inference latency and peak memory.

\subsection{RoboTwin 2.0 Results and Ablations}
\label{ssec:frozen_encoder_comparison}

We organize the RoboTwin 2.0 evaluation around three empirical questions: representation choice, latent world modeling, and preference refinement.
\begin{itemize}
    \item \textit{RQ1}: How do frozen visual representations affect action generation?
    \item \textit{RQ2}: How should future embeddings be predicted in JEPA space?
    \item \textit{RQ3}: What makes offline preference refinement effective for LeWAM?
\end{itemize}

\textbf{\textit{RQ1}: Frozen visual representations.}
Proposition~\ref{prop:gap} and the decomposition in Appendix~\ref{app:usable_action_uncertainty} attribute action uncertainty to the information a representation omits and the information a fixed predictor cannot extract, so we compare frozen vision encoders under the same action predictor and training recipe.
We use a discrete task ID as $c_t$ across all controlled comparisons to isolate visual representation effects from differences in text representations.
We first compare the Wan2.2 VAE, SigLIP, MAE-Large, DINOv3-Large, and V-JEPA2-Large, as shown in Figure~\ref{fig:main_results}\captiona{} and Table~\ref{tab:frozen_encoder_comparison}.
The Wan2.2 VAE performs worst at 10.81\%, while V-JEPA2-Large performs best among the three Large encoders.
Notably, I-JEPA-Huge outperforms V-JEPA2-Huge despite being pretrained only on ImageNet-22K~\citep{ImageNet}.

To further understand why I-JEPA supports action generation effectively, we visualize the attention from action tokens to vision tokens during inference.
Figure~\ref{fig:policy_attention} shows that the attention concentrates on the target objects and target regions and shifts between relevant regions as the manipulation progresses. 
Prior work links segmentation ability to manipulation generalization~\citep{burns2023robust} and shows benefits of geometric information for robot learning~\citep{SpatialBoost,wang2024visual}, so we examine how readily each frozen encoder exposes geometry and object information.
We train lightweight depth and segmentation probes on the frozen encoder outputs with teacher targets from DA3MONO-LARGE~\citep{DA3} and SAM3~\citep{SAM3}, following the protocol in Appendix~\ref{app:representation_probing}.
I-JEPA-Huge performs best among the evaluated individual encoders on both probes while the Wan2.2 VAE performs worst, indicating that I-JEPA makes the geometry and object information emphasized during action generation readily accessible.

\begin{figure}[htpb]
    \centering
    \includegraphics[width=0.95\linewidth]{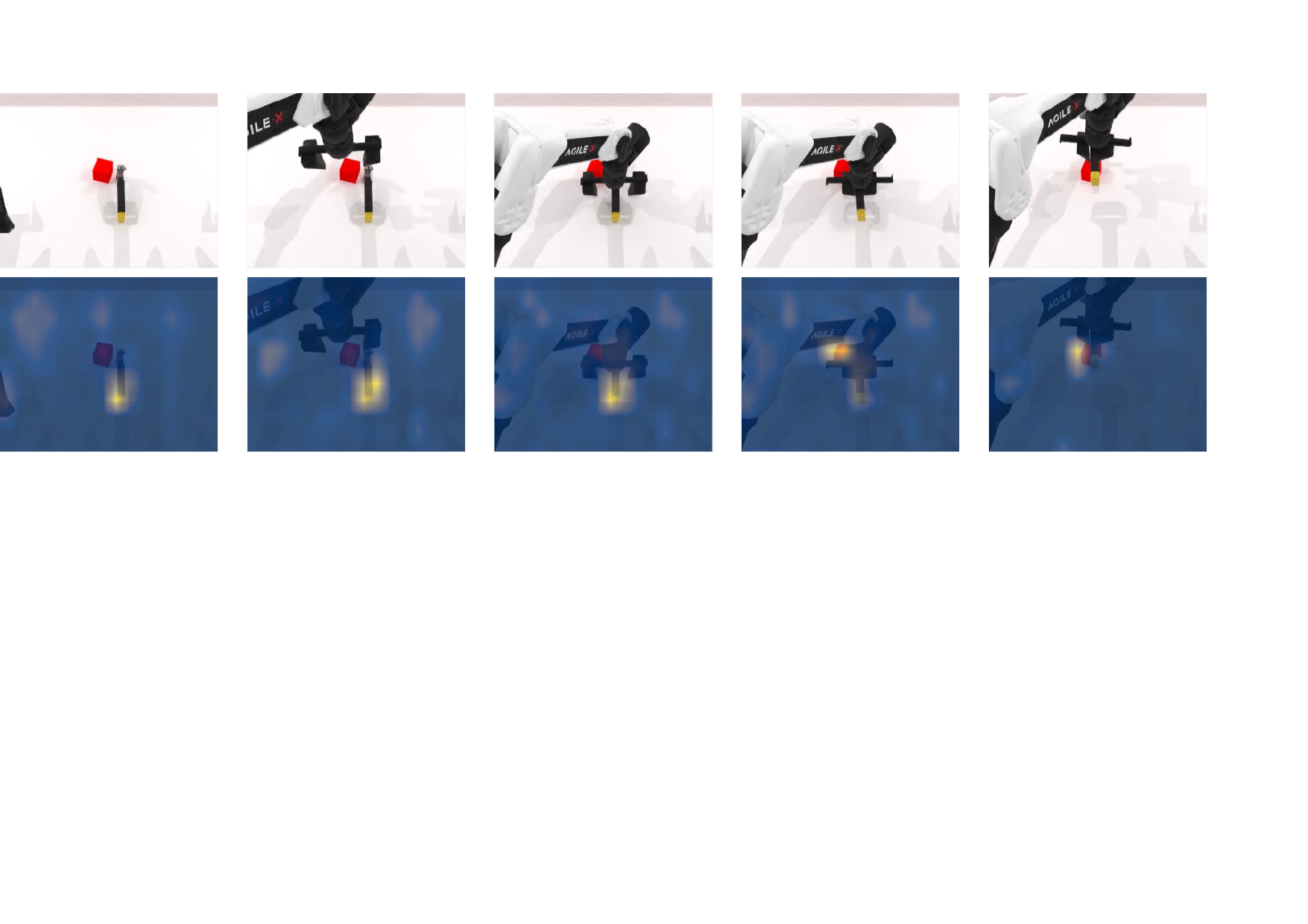}
    \vspace{-0.5em}
    \caption{Attention from action tokens to vision tokens during a representative hammering rollout.}
    \label{fig:policy_attention}
\end{figure}

\begin{table}[htpb]
  \centering
  \caption{Frozen encoder comparison on RoboTwin 2.0 with representation probing.
  Bold, underline, and italics mark the best, second, and third results, respectively.}
  \label{tab:frozen_encoder_comparison}
  \resizebox{0.95\linewidth}{!}{%
  \begin{tabular}{lccccccc}
    \hline
    & \multicolumn{3}{c}{Success rate} & \multicolumn{2}{c}{Depth} & \multicolumn{2}{c}{Segmentation} \\
    Encoder & Clean & Rand. & Avg. & RMSE $\downarrow$ & Corr. $\uparrow$ & AP@50 $\uparrow$ & AP@75 $\uparrow$ \\
    \hline
    Wan2.2-VAE & 12.52 & 9.10 & 10.81 & 0.708 & 0.600 & 0.206 & 0.025 \\
    SigLIP & 81.60 & 76.80 & 79.20 & 0.625 & 0.711 & 0.378 & 0.044 \\
    MAE-Large & 64.30 & 63.80 & 64.05 & 0.556 & 0.774 & 0.535 & 0.147 \\
    DINOv3-Large & 69.58 & 65.50 & 67.54 & 0.572 & 0.765 & 0.515 & 0.146 \\
    V-JEPA2-Large & 73.40 & 70.94 & 72.17 & 0.477 & 0.832 & \textit{0.571} & \textit{0.247} \\
    V-JEPA2-Huge & \textit{83.76} & \textit{79.64} & \textit{81.70} & \textit{0.446} & \textit{0.852} & 0.532 & 0.238 \\
    I-JEPA-Huge & \underline{88.42} & \underline{83.92} & \underline{86.17} & \underline{0.445} & \underline{0.853} & \underline{0.613} & \underline{0.323} \\
    + AdaFuse & \textbf{88.76} & \textbf{85.00} & \textbf{86.88} & \textbf{0.440} & \textbf{0.854} & \textbf{0.616} & \textbf{0.345} \\
    \hline
  \end{tabular}
  }
  \vspace{-0.4em}
\end{table}

\begin{table}[htpb]
  \centering
    \caption{RoboTwin baselines and LeWAM variants. We report average success rates over 50 tasks in clean and randomized settings, with 100 evaluation cases per task in each setting. 
    Embodied PT. indicates whether additional robot data are used for pretraining before benchmark training.}
  \label{tab:robotwin_main_results}
  \begin{tabular}{llccrrr}
    \hline
    \multicolumn{1}{l}{\raisebox{0.55em}{Model}} &
    \multicolumn{1}{l}{\raisebox{0.55em}{Encoder}} &
    \multicolumn{1}{l}{\shortstack[l]{Trainable\\Params.}} &
    \multicolumn{1}{l}{\shortstack[l]{Embodied\\PT.}} &
    \multicolumn{1}{l}{\raisebox{0.55em}{Clean}} &
    \multicolumn{1}{l}{\raisebox{0.55em}{Rand.}} &
    \multicolumn{1}{l}{\raisebox{0.55em}{Avg.}} \\

    \hline
    \multicolumn{7}{l}{\textit{VLM-based VLAs}} \\
    $\pi_{0}$ & SigLIP & 3B & \cmark & 65.92 & 58.40 & 62.16 \\
    $\pi_{0.5}$ & SigLIP & 3B & \cmark & 82.74 & 76.76 & 79.75 \\
    \hline
    \multicolumn{7}{l}{\textit{VDM-based WAMs}} \\
    Motus w/o Pretrain & Wan2.2-VAE & 6B & \xmark & 77.56 & 77.00 & 77.28 \\
    LingBot-VA w/o Pretrain & Wan2.2-VAE & 5B & \xmark & 80.60 & -- & 80.60 \\
    GigaWorld-Policy & Wan2.2-VAE & 5B & \cmark & 86.36 & 85.04 & 85.70 \\
    Motus & Wan2.2-VAE & 6B & \cmark & 88.66 & 87.02 & 87.84 \\
    Fast-WAM & Wan2.2-VAE & 6B & \xmark & 91.88 & 91.78 & 91.83 \\
    LingBot-VA & Wan2.2-VAE & 5B & \cmark & 92.90 & 91.50 & 92.20 \\
    \hline
    \multicolumn{7}{l}{\textit{Latent-based WAMs}} \\
    LaWAM & DINOv3-B/16 & 1.7B & \cmark & 92.64 & 89.80 & 91.22 \\
    \multicolumn{7}{c}{\textit{LeWAM Variants}} \\
    Action Only & I-JEPA-H/14 & 0.4B & \xmark & 88.42 & 83.92 & 86.17 \\
    + WM (FM) & I-JEPA-H/14 & 0.4B & \xmark & 88.18 & 84.24 & 86.21 \\
    + WM & I-JEPA-H/14 & 0.4B & \xmark & 88.86 & 85.18 & 87.02 \\
    + WM + AdaFuse & I-JEPA-H/14 & 0.4B & \xmark & 91.32 & 90.06 & 90.69 \\
    \rowcolor{blue!8} LeWAM & I-JEPA-H/14 & 0.4B & \xmark & \textbf{93.14} & \textbf{91.42} & \textbf{92.28} \\
    \hline
  \end{tabular}
\end{table}

\textbf{\textit{RQ2}: Future embedding prediction.}
We keep LeWAM's I-JEPA encoder frozen and compare direct prediction of future embeddings from learnable future queries, as described in Section~\ref{ssec:lewam_design}, with a flow matching alternative. The latter adds Gaussian noise to future embeddings and predicts their transport directions conditioned on the corresponding clean action prefixes.

Table~\ref{tab:robotwin_main_results} shows that direct prediction achieves 87.02\%, outperforming flow matching at 86.21\% and the Action Only baseline at 86.17\%.
Prior work shows that JEPA representations encode rich semantic and spatial structure~\citep{ijepa, vjepa21}.
In our setting, the frozen encoder maps each observed future frame to a fixed, high-dimensional target embedding, whereas flow matching introduces an additional noise-conditioned transport problem over this space.
We hypothesize that direct regression makes more direct use of this structured supervision, explaining its stronger performance.
We then evaluate AdaFuse, which adaptively integrates representations from multiple encoder layers.
Without future embedding prediction, AdaFuse improves success from 86.17\% to 86.88\%, as shown in Table~\ref{tab:frozen_encoder_comparison}.
Adding future embedding prediction further raises success to 90.69\% in Table~\ref{tab:robotwin_main_results}, with a larger improvement than without AdaFuse.
These results suggest that AdaFuse and future embedding prediction are complementary.
Table~\ref{tab:frozen_encoder_comparison} further shows that AdaFuse improves depth and segmentation probing over the final I-JEPA layer, suggesting that the learned fusion makes complementary information across encoder layers more accessible.

\textbf{\textit{RQ3}: DemoDPO refinement.}
DemoDPO improves the average success rate from 90.69\% to 92.28\% across all 50 RoboTwin tasks without additional environment interaction, as shown in Table~\ref{tab:robotwin_main_results}.
To examine what makes this refinement effective, Figure~\ref{fig:demodpo_ablations} compares pair filtering thresholds, DPO coefficients, and candidate group sizes on a fixed subset of eight challenging tasks.
The evaluation protocol is provided in Appendix~\ref{app:demodpo_ablation_details}.
Pair filtering retains candidates with a sufficient difference in demonstration similarity.
A moderate threshold gives the strongest improvement, whereas using all pairs or imposing an overly strict threshold reduces performance.
This suggests that effective refinement requires both distinguishable preferences and sufficient training pairs.
The DPO coefficient also affects refinement: performance improves as $\beta$ increases to 5000, then declines at 6000.
Increasing the candidate group size from 2 to 4 improves success, while increasing it further to 8 provides little additional benefit.
We therefore use $\varepsilon_{\mathrm{pair}}=10^{-3}$, $\beta=5000$, and $G=4$.

To distinguish the benefit of DemoDPO from that of additional training, we apply 1,000 additional supervised updates to the checkpoint used to initialize DemoDPO.
Success decreases slightly from 73.31\% to 73.25\%, whereas DemoDPO reaches 76.63\% after the same number of updates, indicating that its gains are not explained by additional training alone.
LeWAM therefore matches the strongest baselines in Table~\ref{tab:robotwin_main_results} with 0.4B trainable parameters and no video diffusion backbone, at the lowest latency and peak memory among the evaluated models in Table~\ref{tab:inference_efficiency}.

\begin{figure}[htpb]
    \centering
    \includegraphics[width=\linewidth]{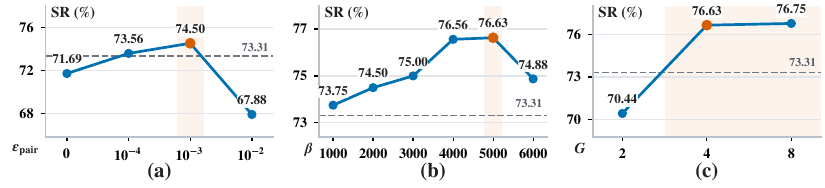}
    \vspace{-1.5em}
    \caption{DemoDPO analyses and ablations for RQ3.
    \captiona~Pair filtering threshold.
    \captionb~DPO coefficient.
    \captionc~Candidate group size.
    The dashed line marks the 73.31\% success rate before DemoDPO.}
    \label{fig:demodpo_ablations}
\end{figure}

\textbf{Real-world experiments.}
Figure~\ref{fig:main_results}\captionb{} summarizes the real-world results, with further results and implementation details provided in Appendix~\ref{app:real_world_protocol}.
\section{Conclusion}
\label{sec:conclusion}

Our theoretical analysis motivates examining both the information retained by visual representations and its accessibility for action generation.
Attention analysis shows that the policy focuses on target objects and regions during manipulation, while controlled encoder comparisons and representation probing suggest that I-JEPA makes the relevant geometric and object information more accessible than raw VAE representations.
These findings motivate LeWAM, which couples action generation with future embedding prediction in I-JEPA space rather than relying on a pretrained video diffusion backbone.
AdaFuse strengthens this representation by integrating complementary information across encoder layers, while DemoDPO further refines the policy using preferences derived from demonstrations.
Together, these components enable LeWAM to achieve an average success rate of 92.28\% on RoboTwin 2.0 and strong performance on real-world manipulation tasks, with only 0.4B trainable parameters and no embodied pretraining.
\subsection*{AI use statement}



In this work, we used generative AI tools to assist with codebase development and manuscript polishing. Generative AI was not used to produce conclusions unsupported by objective experimental evidence. All AI-assisted content, including but not limited to code and manuscript text, was manually reviewed and verified by the authors. We take full responsibility for the final content of this work.




\subsection*{Reproducibility statement}



To support reproducibility, we provide the training and inference code in the supplementary material. Our simulation experiments use the official RoboTwin 2.0 training dataset. The real-world experiments use an in-house dataset collected through teleoperation, which we will publicly release. Detailed model configurations, training procedures, and evaluation protocols are provided in the paper and appendix.



\bibliography{iclr2027_conference}
\bibliographystyle{iclr2027_conference}

\newpage
\appendix
\section*{Appendix}

\section{Theoretical Analysis}
\label{app:mi_wam}

This section proves Propositions~\ref{prop:gap} and~\ref{prop:vae} and analyzes the effective dimension and action information of VAE posterior means.

\subsection{Action-Relevant Information Bounds}

We use the same notation as the main text: $X$ is the current visual observation, $S$ denotes the action-relevant explanatory factors, $A$ is the expert action chunk, $c$ is task context, and $Z=E(X)$ is the visual representation consumed by the action predictor.

\begin{proof}[Proof of Proposition~\ref{prop:gap}]
\renewcommand{\qedsymbol}{}
Let $Z=E(X)$ and suppose $H(A\mid Z,c)$ and $H(A\mid X,c)$ are finite.
Because $Z$ is determined by $X$, conditioning on $(X,Z,c)$ is equivalent to conditioning on $(X,c)$.
\begin{equation}
    I(A;X\mid Z,c)
    =
    H(A\mid Z,c)-H(A\mid X,Z,c)
    =
    H(A\mid Z,c)-H(A\mid X,c)
    =
    \Delta_E(A).
\end{equation}
Since $S=S(X)$, the mutual information chain rule gives
\begin{equation}
    I(A;X\mid Z,c)
    =
    I(A;S,X\mid Z,c)
    =
    I(A;S\mid Z,c)+I(A;X\mid S,Z,c).
\end{equation}
Assumption~\ref{as:sufficiency} states that $A$ is conditionally independent of $X$ given $(S,c)$.
Since $Z$ is a function of $X$, this also gives $I(A;X\mid S,Z,c)=0$.
Combining the two identities and using nonnegativity of conditional mutual information yields
\begin{equation}
    \Delta_E(A)
    =
    I(A;S\mid Z,c)
    \geq 0.
\end{equation}
\end{proof}
This identity motivates evaluating a representation by the action-relevant factor information it preserves rather than by its image reconstruction quality.

\subsection{Spectral Compression of VAE Posterior Means}
\label{app:vae_spectral_compression}

We analyze how VAE training selects the effective dimension of the posterior mean, then determine when the removed directions contain action information.
The spectral analysis specializes the linear VAE result of \citet{wang2022posterior}. The action uncertainty result below follows by conditioning on the retained directions.
This analysis concerns a linear Gaussian model at a global optimum, with a fixed decoder variance.
The linear Gaussian model is an analyzable instance, not a model of the pretrained Wan2.2 VAE, whose encoder is nonlinear and whose pretraining distribution differs from the demonstration distribution.
We use it to establish what reconstruction based selection does and does not guarantee about action information, and we treat the pretrained encoder empirically in Section~\ref{ssec:frozen_encoder_comparison}.

Fix a task context $c$ and let the centered observation vector be $X\in\mathbb{R}^{d_x}$ with
\begin{equation}
    X\mid c\sim\mathcal{N}(0,\Sigma_X),
    \qquad
    \Sigma_X=\sum_{i=1}^{d_x}\lambda_i u_i u_i^\top,
    \qquad
    \lambda_1>\cdots>\lambda_{d_x}>0.
    \label{eq:vae_spectral_data}
\end{equation}
The $u_i$ are orthonormal principal directions, and $\lambda_i$ measures observation variance along $u_i$.
Consider a VAE with $k\leq d_x$ latent coordinates, standard normal prior, and
\begin{equation}
    q(\tilde z\mid x)=\mathcal{N}(Mx,\operatorname{diag}(s_1^2,\ldots,s_k^2)),
    \qquad
    p(x\mid\tilde z)=\mathcal{N}(W\tilde z,\sigma_{\mathrm{dec}}^2\mathbf{I}),
    \label{eq:vae_spectral_model}
\end{equation}
where $M\in\mathbb{R}^{k\times d_x}$, $W\in\mathbb{R}^{d_x\times k}$, and $s_i^2>0$ are learned, while $\sigma_{\mathrm{dec}}^2>0$ is fixed.
The encoder variances are independent of $x$.
Let $\widetilde Z$ denote the sampled latent variable and $\tilde z$ its realization.
The training objective is, up to a constant,
\begin{equation}
    \mathcal{L}_{\mathrm{vae}}
    =
    \frac{1}{2\sigma_{\mathrm{dec}}^2}
    \mathbb{E}_{X\mid c}
    \mathbb{E}_{\widetilde Z\mid X}
    \left[\|X-W\widetilde Z\|_2^2\right]
    +
    \beta_{\mathrm{vae}}\,
    \mathbb{E}_{X\mid c}
    D_{\mathrm{KL}}\bigl(q(\tilde z\mid X)\|\mathcal{N}(0,\mathbf{I})\bigr).
    \label{eq:vae_spectral_objective}
\end{equation}
Here $\beta_{\mathrm{vae}}>0$ is the VAE regularization weight.
Although training samples $\widetilde Z$, the representation supplied to the action predictor is the deterministic posterior mean
$Z_{\mathrm{vae}}=\mathbb{E}[\widetilde Z\mid X]=MX$.
Let $\lambda_{\mathrm{cut}}=\beta_{\mathrm{vae}}\sigma_{\mathrm{dec}}^2$.
At a global optimum expressed in principal coordinates, the posterior mean has coordinates
\begin{equation}
    (Z_{\mathrm{vae}})_i
    =
    \frac{\sqrt{(\lambda_i-\lambda_{\mathrm{cut}})_+}}{\lambda_i}
    u_i^\top X,
    \qquad i=1,\ldots,k,
    \label{eq:vae_spectral_mean}
\end{equation}
where $(a)_+=\max(a,0)$.
Its effective dimension, defined as the rank of its covariance, is
\begin{equation}
    r_{\mathrm{eff}}
    :=
    \operatorname{rank}\operatorname{Cov}(Z_{\mathrm{vae}}\mid c)
    =
    \min\bigl(k,\#\{i:\lambda_i>\lambda_{\mathrm{cut}}\}\bigr).
    \label{eq:vae_effective_dimension}
\end{equation}
Let $J$ be the index set of retained directions, with $J=\{i\leq k:\lambda_i>\lambda_{\mathrm{cut}}\}$, and define
$\Sigma_{\mathrm{omit}}=\sum_{i\notin J}\lambda_i u_i u_i^\top$.
If the expert action chunk, viewed as a vector, satisfies
\begin{equation}
    A=B_cX+\xi,
    \qquad
    \xi\sim\mathcal{N}(0,\Sigma_\xi),
    \qquad
    \xi\perp X\mid c,
    \qquad
    \Sigma_\xi\succ0,
    \label{eq:vae_linear_action}
\end{equation}
then the additional action uncertainty is
\begin{equation}
    \Delta_E(A)
    =
    I(A;X\mid Z_{\mathrm{vae}},c)
    =
    \frac12\log\det\left(
        \mathbf{I}
        +
        \Sigma_\xi^{-1/2}B_c\Sigma_{\mathrm{omit}}B_c^\top\Sigma_\xi^{-1/2}
    \right).
    \label{eq:vae_spectral_action_gap}
\end{equation}
It is strictly positive exactly when $B_cu_i\neq0$ for at least one omitted direction.

\begin{proof}[Proof of Proposition~\ref{prop:vae}]
\renewcommand{\qedsymbol}{}
\textbf{1. Reduce training to principal directions.}
The spectral solution for the linear VAE aligns the decoder with principal directions of $\Sigma_X$ \citep[Theorem 2]{wang2022posterior}.
For completeness, we derive the resulting threshold and its effect on the posterior mean.
Write $x_i=u_i^\top X$, and denote the scalar decoder loading, encoder mean coefficient, and posterior variance along this direction by $w_i$, $m_i$, and $s_i^2$.
The contribution of this direction to Equation~\ref{eq:vae_spectral_objective} is
\begin{equation}
    \ell_i
    =
    \frac{\lambda_i(1-w_im_i)^2+w_i^2s_i^2}{2\sigma_{\mathrm{dec}}^2}
    +
    \frac{\beta_{\mathrm{vae}}}{2}
    \left(\lambda_i m_i^2+s_i^2-\log s_i^2-1\right).
    \label{eq:vae_scalar_objective}
\end{equation}
The first term rewards reconstruction, while the second penalizes the encoder mean and deviation of the posterior variance from the prior variance.

\textbf{2. Identify which directions survive regularization.}
Fix $w_i$. Minimizing over $m_i$ and $s_i^2$ gives
\begin{equation}
    m_i^*=\frac{w_i}{w_i^2+\lambda_{\mathrm{cut}}},
    \qquad
    (s_i^2)^*=\frac{\lambda_{\mathrm{cut}}}{w_i^2+\lambda_{\mathrm{cut}}}.
    \label{eq:vae_optimal_encoder}
\end{equation}
Substituting these values and writing $t_i=w_i^2\geq0$ reduces the objective to
\begin{equation}
    \ell_i^*(t_i)
    =
    \frac{\beta_{\mathrm{vae}}}{2}
    \left[
        \frac{\lambda_i}{t_i+\lambda_{\mathrm{cut}}}
        +
        \log\left(1+\frac{t_i}{\lambda_{\mathrm{cut}}}\right)
    \right],
    \qquad
    \frac{d\ell_i^*}{dt_i}
    =
    \frac{\beta_{\mathrm{vae}}}{2}
    \frac{t_i+\lambda_{\mathrm{cut}}-\lambda_i}
    {(t_i+\lambda_{\mathrm{cut}})^2}.
    \label{eq:vae_spectral_threshold}
\end{equation}
Thus $(w_i^2)^*=(\lambda_i-\lambda_{\mathrm{cut}})_+$.
Choosing nonnegative decoder loadings gives Equation~\ref{eq:vae_spectral_mean}. Sign changes or permutations of latent coordinates do not affect the result.
When $\lambda_i\leq\lambda_{\mathrm{cut}}$, both $w_i^*$ and $m_i^*$ vanish, and $(s_i^2)^*=1$: this coordinate matches the prior and its mean carries no observation information.
When $\lambda_i>\lambda_{\mathrm{cut}}$, its mean coefficient is nonzero.
The reduction in the optimized loss relative to omitting a direction increases with $\lambda_i$ above the threshold, so the $k$ available coordinates retain the largest eligible eigenvalues.
Their number is given by Equation~\ref{eq:vae_effective_dimension}.

\textbf{3. Determine the information missing from the posterior mean.}
For every $i\in J$, the nonzero coefficient in Equation~\ref{eq:vae_spectral_mean} makes $u_i^\top X$ recoverable from $Z_{\mathrm{vae}}$.
For $i\notin J$, the corresponding coordinate is absent.
The Gaussian principal coordinates are independent, so
\begin{equation}
    \mathbb{E}[X\mid Z_{\mathrm{vae}},c]
    =
    \sum_{i\in J}u_i u_i^\top X,
    \qquad
    \operatorname{Cov}(X\mid Z_{\mathrm{vae}},c)
    =
    \Sigma_{\mathrm{omit}}.
    \label{eq:vae_missing_covariance}
\end{equation}
This identifies the lost directions for the deterministic mean, rather than a noisy latent sample.

\textbf{4. Translate missing observation directions into action uncertainty.}
Applying Equation~\ref{eq:vae_linear_action} gives
\begin{equation}
    \operatorname{Cov}(A\mid X,c)=\Sigma_\xi,
    \qquad
    \operatorname{Cov}(A\mid Z_{\mathrm{vae}},c)
    =
    \Sigma_\xi+B_c\Sigma_{\mathrm{omit}}B_c^\top.
    \label{eq:vae_action_covariance}
\end{equation}
Both conditional action distributions are Gaussian with positive definite covariance.
Their differential entropies are finite, and subtracting them yields
\begin{equation}
    \Delta_E(A)
    =
    H(A\mid Z_{\mathrm{vae}},c)-H(A\mid X,c)
    =
    \frac12\log
    \frac{\det(\Sigma_\xi+B_c\Sigma_{\mathrm{omit}}B_c^\top)}
    {\det\Sigma_\xi},
    \label{eq:vae_action_entropy_gap}
\end{equation}
which equals the expression in Equation~\ref{eq:vae_spectral_action_gap}.
Here $H$ denotes differential entropy. No nonnegativity assumption on differential entropy is used.
Finally,
$B_c\Sigma_{\mathrm{omit}}B_c^\top
=\sum_{i\notin J}\lambda_i(B_cu_i)(B_cu_i)^\top$
is positive semidefinite and is nonzero exactly when an omitted direction affects the action.
The log determinant gap is therefore strictly positive precisely under the stated condition.
\end{proof}

\begin{corollary}[Reconstruction ordering does not order action relevance]\label{cor:misalign}
In the setting of Proposition~\ref{prop:vae} with $k<d_x$, let $B_c=b\,u_{d_x}^{\top}$, so the action depends only on the least energetic observation direction.
Then $\Delta_E(A)=I(A;X\mid c)$, that is, $Z_{\mathrm{vae}}$ retains no action information, whereas the single direction $u_{d_x}^{\top}X$ attains $\Delta_E(A)=0$ while having the largest reconstruction error among principal directions.
There exist sequences of observation and action noise covariances for which the minimum reconstruction error from $Z_{\mathrm{vae}}$ tends to zero while $\Delta_E(A)$ diverges.
\end{corollary}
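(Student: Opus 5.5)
The plan is to specialize the closed form in Equation~\ref{eq:vae_spectral_action_gap} to a rank-one action map, and to compare it with the unconditional quantity $I(A;X\mid c)$, which is computed by the same Gaussian entropy argument.

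\textbf{Step 1: the VAE latent carries no action information.} Since $k<d_x$ and $J\subseteq\{1,\ldots,k\}$, the index $d_x$ is never in $J$, so $u_{d_x}$ is an omitted direction. With $B_c=b\,u_{d_x}^\top$ we have $B_cu_i=0$ for $i\neq d_x$. Hence
\begin{equation*}
    B_c\Sigma_{\mathrm{omit}}B_c^\top=\lambda_{d_x}bb^\top=B_c\Sigma_XB_c^\top .
\end{equation*}
Equation~\ref{eq:vae_spectral_action_gap} then gives
\begin{equation*}
    \Delta_E(A)=\tfrac12\log\bigl(1+\lambda_{d_x}b^\top\Sigma_\xi^{-1}b\bigr),
\end{equation*}
using the matrix determinant lemma. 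Separately, $A\mid c\sim\mathcal{N}(0,\Sigma_\xi+B_c\Sigma_XB_c^\top)$ and $A\mid X,c$ has covariance $\Sigma_\xi$, so $I(A;X\mid c)$ is the same log-determinant ratio. This proves $\Delta_E(A)=I(A;X\mid c)$. Because $Z_{\mathrm{vae}}$ is a function of $X$, the chain rule gives $I(A;X\mid c)=I(A;Z_{\mathrm{vae}}\mid c)+I(A;X\mid Z_{\mathrm{vae}},c)$. Therefore $I(A;Z_{\mathrm{vae}}\mid c)=0$.

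\textbf{Step 2: the least energetic direction alone suffices for the action.} Take the deterministic representation $E(X)=u_{d_x}^\top X$. Repeat step~3 of the proof of Proposition~\ref{prop:vae} with retained set $\{d_x\}$, using independence of the principal coordinates. This gives $\operatorname{Cov}(X\mid E(X),c)=\sum_{i<d_x}\lambda_iu_iu_i^\top$, which is annihilated by $B_c$, so $\Delta_E(A)=0$. For reconstruction, the Bayes-optimal reconstruction from a single principal coordinate $u_j^\top X$ has error $\operatorname{tr}\operatorname{Cov}(X\mid u_j^\top X)=\sum_{i\neq j}\lambda_i$. Since the $\lambda_i$ are strictly decreasing, this error is maximized at $j=d_x$.

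\textbf{Step 3: reconstruction error can vanish while the action gap diverges.} I interpret ``minimum reconstruction error from $Z_{\mathrm{vae}}$'' as the MMSE $\operatorname{tr}\operatorname{Cov}(X\mid Z_{\mathrm{vae}},c)=\operatorname{tr}\Sigma_{\mathrm{omit}}$, from Equation~\ref{eq:vae_missing_covariance}. Pinning down this notion is the main point needing care. Construct the sequences as follows:
\begin{itemize}
    \item Set $d_x=k+1$.
    \item Fix $\lambda_1>\cdots>\lambda_k>\lambda_{\mathrm{cut}}$, so that $J=\{1,\ldots,k\}$.
    \item Let $\lambda_{d_x}=\varepsilon_n\downarrow0$, with $\varepsilon_n<\min(\lambda_k,\lambda_{\mathrm{cut}})$ so the ordering is preserved.
    \item Choose action noise $\Sigma_\xi=\varepsilon_n^2\mathbf{I}$.
\end{itemize}
Then the reconstruction error is $\operatorname{tr}\Sigma_{\mathrm{omit}}=\varepsilon_n\to0$. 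Meanwhile, by Step~1,
\begin{equation*}
    \Delta_E(A)=\tfrac12\log\bigl(1+\|b\|^2/\varepsilon_n\bigr)\to\infty .
\end{equation*}
Beyond fixing the notion of reconstruction error, the only checks are that the strict eigenvalue ordering and $\Sigma_\xi\succ0$ required by Proposition~\ref{prop:vae} hold along the sequence, which they do by construction.
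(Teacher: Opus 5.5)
Your proof is correct and follows the paper's argument step for step. The shared steps are the rank-one specialization $B_c\Sigma_{\mathrm{omit}}B_c^\top=\lambda_{d_x}bb^\top=B_c\Sigma_XB_c^\top$ with the determinant identity, the single-coordinate representation with Bayes error $\operatorname{tr}\Sigma_X-\lambda_j$, and a construction with small omitted eigenvalues and $\Sigma_\xi=\varepsilon^2\mathbf{I}$.

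The one real difference is that you fix $d_x=k+1$. The paper keeps $d_x$ arbitrary by placing every omitted eigenvalue $\lambda_i$, $i>k$, in $(\varepsilon/2,\varepsilon]$, so that $\operatorname{tr}\Sigma_{\mathrm{omit}}\leq(d_x-k)\varepsilon\to0$. If $d_x$ is regarded as given, you should shrink all of $\lambda_{k+1},\ldots,\lambda_{d_x}$ rather than only $\lambda_{d_x}$, and you should state explicitly that $b\neq0$.
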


\begin{proof}[Proof of Corollary~\ref{cor:misalign}]
\renewcommand{\qedsymbol}{}
Since $k<d_x$, the index $d_x$ is omitted, so $\lambda_{d_x}u_{d_x}u_{d_x}^{\top}\preceq\Sigma_{\mathrm{omit}}$ and
$B_c\Sigma_{\mathrm{omit}}B_c^{\top}=\lambda_{d_x}bb^{\top}=B_c\Sigma_XB_c^{\top}$,
using $B_c=b\,u_{d_x}^{\top}$ and $u_{d_x}^{\top}\Sigma_Xu_{d_x}=\lambda_{d_x}$.
Substituting into Equation~\ref{eq:vae_spectral_action_gap} and applying $\det(\mathbf{I}+vv^{\top})=1+\|v\|_2^2$ gives
\begin{equation}
    \Delta_E(A)
    =
    \frac12\log\bigl(1+\lambda_{d_x}b^{\top}\Sigma_\xi^{-1}b\bigr)
    =
    \frac12\log\frac{\det(\Sigma_\xi+B_c\Sigma_XB_c^{\top})}{\det\Sigma_\xi}
    =
    I(A;X\mid c),
\end{equation}
so the posterior mean retains none of the action information in $X$.
For the representation $u_{d_x}^{\top}X$, the action is conditionally independent of $X$ given $(u_{d_x}^{\top}X,c)$, so $\Delta_E(A)=0$, while its reconstruction error $\operatorname{tr}\Sigma_X-\lambda_{d_x}$ is the largest among projections onto a single principal direction.
Fix $\lambda_1>\cdots>\lambda_k>\lambda_{\mathrm{cut}}$ and take $0<\epsilon<\lambda_{\mathrm{cut}}$.
Choose distinct $\lambda_i\in(\epsilon/2,\epsilon]$ for all $i>k$, together with $\Sigma_\xi=\epsilon^{2}\mathbf{I}$ and $\|b\|_2=1$.
Then $J=\{1,\ldots,k\}$.
Then $\operatorname{tr}\Sigma_{\mathrm{omit}}\leq(d_x-k)\epsilon\to0$ while $\lambda_{d_x}b^{\top}\Sigma_\xi^{-1}b\geq1/(2\epsilon)\to\infty$.
\end{proof}

The result separates two sources of dimensional reduction: the architectural width $k$ and the threshold $\beta_{\mathrm{vae}}\sigma_{\mathrm{dec}}^2$ induced by training.
Within this model, directions are retained according to their contribution to visual reconstruction, while their effect on action uncertainty depends additionally on $B_c$.
The effective dimension can be smaller than the latent width, and losing a direction increases action uncertainty only when that direction matters for the task.
Corollary~\ref{cor:misalign} and Proposition~\ref{prop:loss_bound} bound the two ends of this behavior: reconstruction based selection can retain no action information when the action depends on a low variance direction, and it can lose only a controlled amount when reconstruction is accurate and the action sensitivity is bounded.

\subsection{Usable Action Uncertainty}
\label{app:usable_action_uncertainty}

Proposition~\ref{prop:gap} measures information content and is indifferent to whether a downstream model can extract it.
We record here the decomposition used in Section~\ref{ssec:action_sufficient_latents} and the bound that separates its two terms.
Let $\mathcal{V}$ be a family of conditional densities for $A$ given $(Z,c)$ and let
$H_{\mathcal{V}}(A\mid Z,c)=\inf_{f\in\mathcal{V}}\mathbb{E}[-\log f(A\mid Z,c)]$
be the usable conditional entropy of \citet{xu2020usable}.
For any $f$,
$\mathbb{E}[-\log f(A\mid Z,c)]-H(A\mid Z,c)
=\mathbb{E}_{Z,c}D_{\mathrm{KL}}\bigl(p(\cdot\mid Z,c)\,\|\,f(\cdot\mid Z,c)\bigr)\geq0$,
so $H_{\mathcal{V}}(A\mid Z,c)\geq H(A\mid Z,c)$, with equality when $\mathcal{V}$ contains the true conditional.
Adding and subtracting $H(A\mid Z,c)$ yields
\begin{equation}
    \underbrace{H_{\mathcal{V}}(A\mid Z,c)-H(A\mid X,c)}_{\text{optimal excess log loss}}
    =
    \underbrace{\Delta_E(A)}_{\text{information loss}}
    +
    \underbrace{H_{\mathcal{V}}(A\mid Z,c)-H(A\mid Z,c)}_{\text{usability deficit}},
    \label{eq:usable_decomposition}
\end{equation}
whose first term is nonnegative by Proposition~\ref{prop:gap} and whose second term is nonnegative by the inequality above.
The first term depends only on the representation, while the second depends jointly on the representation and the downstream model.
Here $A$ denotes the vectorized action chunk of dimension $H_{\mathrm a}D_a$.
If $\mathcal{V}$ consists of predictors $\mathcal{N}(g(Z,c),\sigma^{2}\mathbf{I})$ with $g$ ranging over the realizable regressors and $\sigma^{2}>0$ free, and $0<\overline{M}(Z)<\infty$, then taking the infimum over $g$ and optimizing $\sigma^{2}$ gives $\sigma^{2}=\overline{M}(Z)/(H_{\mathrm a}D_a)$ and
\begin{equation}
    H_{\mathcal{V}}(A\mid Z,c)
    =
    \frac{H_{\mathrm a}D_a}{2}\log\!\left(\frac{2\pi e\,\overline{M}(Z)}{H_{\mathrm a}D_a}\right),
    \qquad
    \overline{M}(Z)=\inf_{g}\mathbb{E}\left[\|A-g(Z,c)\|_2^2\right].
\end{equation}
For this Gaussian family, usable conditional entropy is monotone in the minimum achievable action MSE.
\begin{proposition}[Loss channel bound]\label{prop:loss_bound}
In the setting of Proposition~\ref{prop:vae} with $A=B_cX+\xi$ as in Equation~\ref{eq:vae_linear_action},
\begin{equation}
    \Delta_E(A)
    \leq
    \frac{H_{\mathrm a}D_a}{2}
    \log\!\left(1+\frac{\|B_c\|_2^2\,\mathrm{MSE}_{\mathrm{rec}}}{\lambda_{\min}(\Sigma_\xi)}\right),
    \quad
    \mathrm{MSE}_{\mathrm{rec}}
    =
    \operatorname{tr}\Sigma_{\mathrm{omit}}
    =
    \mathbb{E}\bigl[\|X-\mathbb{E}[X\mid Z_{\mathrm{vae}},c]\|_2^2\bigr].
\end{equation}
\end{proposition}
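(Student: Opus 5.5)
Starting from the closed form already obtained in the proof of Proposition~\ref{prop:vae}, Equation~\ref{eq:vae_spectral_action_gap},
\[
\Delta_E(A)=\tfrac12\log\det\bigl(\mathbf{I}+K\bigr),\qquad K=\Sigma_\xi^{-1/2}B_c\Sigma_{\mathrm{omit}}B_c^\top\Sigma_\xi^{-1/2},
\]
the plan is to bound the log-determinant by the largest eigenvalue of the positive semidefinite matrix $K$, and then bound that eigenvalue by reconstruction error. The identification $\mathrm{MSE}_{\mathrm{rec}}=\operatorname{tr}\Sigma_{\mathrm{omit}}$ comes first. By Equation~\ref{eq:vae_missing_covariance}, the conditional covariance of $X$ given $(Z_{\mathrm{vae}},c)$ is $\Sigma_{\mathrm{omit}}$. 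Since $\mathbb{E}[X\mid Z_{\mathrm{vae}},c]$ is the MMSE estimator, its expected squared error equals $\mathbb{E}\operatorname{tr}\operatorname{Cov}(X\mid Z_{\mathrm{vae}},c)=\operatorname{tr}\Sigma_{\mathrm{omit}}$, which gives the stated equality.

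For the main inequality, the eigenvalues $\mu_j\ge0$ of $K$ number exactly $H_{\mathrm a}D_a$, the dimension of the vectorized action. Therefore
\[
\log\det(\mathbf{I}+K)=\sum_j\log(1+\mu_j)\le H_{\mathrm a}D_a\log\bigl(1+\lambda_{\max}(K)\bigr).
\]
Next we bound $\lambda_{\max}(K)$ by submultiplicativity of the operator norm:
\[
\lambda_{\max}(K)\le\|\Sigma_\xi^{-1/2}\|_2^2\,\|B_c\|_2^2\,\|\Sigma_{\mathrm{omit}}\|_2=\frac{\|B_c\|_2^2\,\lambda_{\max}(\Sigma_{\mathrm{omit}})}{\lambda_{\min}(\Sigma_\xi)}.
\]
Since $\Sigma_{\mathrm{omit}}\succeq0$, its top eigenvalue is at most its trace, so $\lambda_{\max}(\Sigma_{\mathrm{omit}})\le\operatorname{tr}\Sigma_{\mathrm{omit}}=\mathrm{MSE}_{\mathrm{rec}}$. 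Monotonicity of $\log(1+\cdot)$ and the factor $\tfrac12$ then yield the claim.

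There is no real obstacle. The only points needing care are the following. The first is using the Gaussian closed form, and hence the setting of Proposition~\ref{prop:vae}, including $\Sigma_\xi\succ0$ so that $\lambda_{\min}(\Sigma_\xi)>0$. The second is noting that $\|B_c\|_2$ denotes the spectral norm. The third is that the dimension factor arises from counting eigenvalues of $K$. A tighter version, with $\lambda_{\max}(\Sigma_{\mathrm{omit}})$ in place of the trace or with the trace inequality $\sum_j\log(1+\mu_j)\le \operatorname{rank}(K)\log(1+\operatorname{tr}K/\operatorname{rank}K)$, is available, but the crude chain above suffices for the stated bound.
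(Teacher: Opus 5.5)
Your proof is correct and follows essentially the same route as the paper. Both start from the log-determinant closed form of Proposition~\ref{prop:vae}, use $\Sigma_{\mathrm{omit}}\preceq(\operatorname{tr}\Sigma_{\mathrm{omit}})\mathbf{I}$ together with $\Sigma_\xi^{-1}\preceq\lambda_{\min}(\Sigma_\xi)^{-1}\mathbf{I}$ to obtain $K\preceq\frac{\|B_c\|_2^2\,\mathrm{MSE}_{\mathrm{rec}}}{\lambda_{\min}(\Sigma_\xi)}\mathbf{I}$, and conclude from monotonicity of $\log\det$ in the Loewner order, which is the matrix-order phrasing of your eigenvalue-counting and operator-norm argument.
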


\begin{proof}
\renewcommand{\qedsymbol}{}
The identity for $\mathrm{MSE}_{\mathrm{rec}}$ follows from Equation~\ref{eq:vae_missing_covariance}.
Since $\Sigma_{\mathrm{omit}}\succeq0$, we have
$\Sigma_{\mathrm{omit}}\preceq\lambda_{\max}(\Sigma_{\mathrm{omit}})\mathbf{I}\preceq(\operatorname{tr}\Sigma_{\mathrm{omit}})\mathbf{I}$, hence
$B_c\Sigma_{\mathrm{omit}}B_c^{\top}\preceq\mathrm{MSE}_{\mathrm{rec}}\,B_cB_c^{\top}\preceq\mathrm{MSE}_{\mathrm{rec}}\|B_c\|_2^{2}\mathbf{I}$.
Conjugating by $\Sigma_\xi^{-1/2}$ and using $\Sigma_\xi^{-1}\preceq\lambda_{\min}(\Sigma_\xi)^{-1}\mathbf{I}$ gives
\begin{equation}
    \Sigma_\xi^{-1/2}B_c\Sigma_{\mathrm{omit}}B_c^{\top}\Sigma_\xi^{-1/2}
    \preceq
    \frac{\|B_c\|_2^{2}\,\mathrm{MSE}_{\mathrm{rec}}}{\lambda_{\min}(\Sigma_\xi)}\mathbf{I}.
\end{equation}
The log determinant is monotone in the positive semidefinite order, so substituting this bound into Equation~\ref{eq:vae_spectral_action_gap} and evaluating the determinant of the resulting multiple of the identity gives the claim.
\end{proof}

Because $\mathrm{MSE}_{\mathrm{rec}}$ is the minimum mean squared error of predicting $X$ from $Z_{\mathrm{vae}}$, it is bounded above by the error of any particular decoder, including one acting on the sampled latent $\widetilde Z$, since $X\to Z_{\mathrm{vae}}\to\widetilde Z$ forms a Markov chain.
Measured reconstruction fidelity therefore upper bounds the information loss term within this model.
Corollary~\ref{cor:misalign} operates in the regime where the action sensitivity $\|B_c\|_2^2/\lambda_{\min}(\Sigma_\xi)$ diverges and the bound becomes vacuous, so the two results are compatible.
The bound relates reconstruction error to action information loss through the action sensitivity and noise level in this linear Gaussian model.

\subsection{Limits of Downstream Recovery}
Let a downstream model construct a hidden state $R$ deterministically from $(Z,c)$. The data processing inequality gives
\begin{equation}
    I(S;R)
    \leq
    I(S;Z,c).
\end{equation}
Thus, a trainable transformer can reorganize retained information into a predictor state, but it cannot recreate action-relevant information discarded jointly by the representation and context.

\section{DemoDPO Derivation and Algorithm}
\label{app:demodpo_details}

We derive the DemoDPO objective by following Diffusion-DPO from action likelihoods to a surrogate based on denoising errors, then converting these errors to flow matching errors.
The derivation separates the path approximation, the bound used for sampling a single step, and the weighting choice used in training.

\subsection{Objective Derivation}
\label{app:demodpo_derivation}

\paragraph{1. Preference objective.}
Let $\mathcal{D}=\{(x_t,A^w,A^l)\}$ contain preference pairs under the training condition $x_t=(X_t,c_t)$.
Using the notation of Equation~\ref{eq:standard_dpo}, standard DPO optimizes
\begin{equation}
    \mathcal{L}_{\mathrm{DPO}}
    =
    -\mathbb{E}_{\mathcal{D}}
    \log\sigma\left(
        \beta\left[
            \rho_\theta(A^w\mid x_t)-\rho_\theta(A^l\mid x_t)
        \right]
    \right),
    \qquad
    \rho_\theta(A\mid x_t)
    =
    \log\frac{\pi_\theta(A\mid x_t)}{\pi_{\mathrm{ref}}(A\mid x_t)}.
    \label{eq:diffusion_dpo_endpoint}
\end{equation}
This objective favors the preferred action relative to the reference policy.
Its direct evaluation requires action likelihoods, which are not outputs of the velocity predictor.
We therefore follow the diffusion surrogate construction of \citet{diffusion_dpo}, before adapting it to flow matching.

\paragraph{2. From actions to generation paths.}
In the diffusion construction, let $A_0=A$ and let $A_{1:K}$ denote intermediate noisy actions.
Here $k$ indexes diffusion steps, not control steps.
For $\phi\in\{\theta,\mathrm{ref}\}$, the path distribution and its endpoint marginal are
\begin{equation}
    p_\phi(A_{0:K}\mid x_t)
    =
    p(A_K)\prod_{k=1}^{K}p_\phi(A_{k-1}\mid A_k,x_t),
    \qquad
    p_\phi(A\mid x_t)
    =
    \int p_\phi(A_{0:K}\mid x_t)\,dA_{1:K}.
    \label{eq:diffusion_dpo_reverse_process}
\end{equation}
The marginal integrates over all paths ending at $A$, whereas the path distribution factorizes into individual transitions.
Diffusion-DPO constructs a preference objective on paths, using the joint KL to upper bound the endpoint KL:
\begin{equation}
    D_{\mathrm{KL}}\left(
        p_\theta(A_0\mid x_t)\|p_{\mathrm{ref}}(A_0\mid x_t)
    \right)
    \leq
    D_{\mathrm{KL}}\left(
        p_\theta(A_{0:K}\mid x_t)\|p_{\mathrm{ref}}(A_{0:K}\mid x_t)
    \right).
    \label{eq:diffusion_dpo_path_kl}
\end{equation}
The corresponding score averages the path log ratio conditional on the endpoint:
\begin{equation}
    \rho_\theta^{\mathrm{path}}(A\mid x_t)
    =
    \mathbb{E}_{p_\theta(A_{1:K}\mid A,x_t)}
    \left[
        \log\frac{p_\theta(A_{0:K}\mid x_t)}
        {p_{\mathrm{ref}}(A_{0:K}\mid x_t)}
    \right].
    \label{eq:diffusion_dpo_path_score}
\end{equation}
Replacing $\rho_\theta$ by this score gives a surrogate preference objective on paths.
The bound in Equation~\ref{eq:diffusion_dpo_path_kl} applies to the endpoint KL,
not the original DPO loss.
The shared terminal prior cancels in the path ratio, leaving a sum of transition log ratios.

\paragraph{3. From paths to a sampled step.}
Sampling paths conditional on a given endpoint remains intractable.
Following Diffusion-DPO, we approximate this posterior by the known forward noising process $q(A_{1:K}\mid A)$.
Define the expected transition log ratio
\begin{equation}
    g_{\theta,k}(A,A_k\mid x_t)
    :=
    \mathbb{E}_{q(A_{k-1}\mid A_k,A)}
    \left[
        \log\frac{p_\theta(A_{k-1}\mid A_k,x_t)}
        {p_{\mathrm{ref}}(A_{k-1}\mid A_k,x_t)}
    \right].
    \label{eq:diffusion_dpo_step_score}
\end{equation}
At $k=1$, $A_0=A$ is fixed, so this is the log density ratio for the final denoising step.
The approximated path score becomes
\begin{equation}
    \rho_\theta^q(A\mid x_t)
    :=
    \mathbb{E}_{q(A_{1:K}\mid A)}
    \log\frac{p_\theta(A_{0:K}\mid x_t)}
    {p_{\mathrm{ref}}(A_{0:K}\mid x_t)}
    =
    K\,\mathbb{E}_{k,A_k\sim q(\cdot\mid A)}
    g_{\theta,k}(A,A_k\mid x_t),
    \label{eq:diffusion_dpo_forward_score}
\end{equation}
where $k$ is uniform on $\{1,\ldots,K\}$.
Thus, the path sum can be estimated by sampling a single step.
For a fixed pair, write $g_{\theta,k}^{w}$ and $g_{\theta,k}^{l}$ for the two transition scores at the same sampled step.
Since $-\log\sigma$ is convex, Jensen's inequality gives
\begin{equation}
    \mathcal{L}_{q}
    :=
    -\mathbb{E}_{\mathcal{D}}
    \log\sigma\left(
        \beta K\,\mathbb{E}_{k,q}
        [g_{\theta,k}^{w}-g_{\theta,k}^{l}]
    \right)
    \leq
    -\mathbb{E}_{\mathcal{D},k,q}
    \log\sigma\left(
        \beta K[g_{\theta,k}^{w}-g_{\theta,k}^{l}]
    \right)
    =:\mathcal{L}_{\mathrm{step}}.
    \label{eq:diffusion_dpo_step_bound}
\end{equation}
Here $q$ supplies the noisy marginals for both actions.
This bound makes training possible with one sampled noise level per pair.
It applies to the objective after the forward process approximation.

\paragraph{4. From Gaussian transitions to denoising errors.}
The remaining transition scores can be evaluated through noise prediction.
For $k\geq2$, inserting the forward posterior into the log ratio gives
\begin{align}
    g_{\theta,k}(A,A_k\mid x_t)
    &=
    D_{\mathrm{KL}}\left(
        q(A_{k-1}\mid A_k,A)\|p_{\mathrm{ref}}(A_{k-1}\mid A_k,x_t)
    \right)
    \nonumber\\
    &\quad-
    D_{\mathrm{KL}}\left(
        q(A_{k-1}\mid A_k,A)\|p_\theta(A_{k-1}\mid A_k,x_t)
    \right).
    \label{eq:diffusion_dpo_step_kl}
\end{align}
In the Gaussian diffusion construction, the forward posterior has mean $\mu_{q,k}$, and the reverse models have means $\mu_{\phi,k}$ and a shared fixed covariance $\nu_k^2\mathbf{I}$.
Terms involving the posterior covariance cancel between the two KLs.
Under the usual noise parameterization, $\mu_{\phi,k}-\mu_{q,k}=b_k(\epsilon-\hat\epsilon_\phi)$, where $b_k$ depends only on the noise schedule.
Consequently,
\begin{align}
    g_{\theta,k}(A,A_k\mid x_t)
    &=
    -\frac{1}{2\nu_k^2}
    \left[
        \|\mu_{\theta,k}-\mu_{q,k}\|_2^2
        -
        \|\mu_{\mathrm{ref},k}-\mu_{q,k}\|_2^2
    \right]
    \nonumber\\
    &=
    -\omega_k
    \left[
        \|\hat\epsilon_\theta-\epsilon\|_2^2
        -
        \|\hat\epsilon_{\mathrm{ref}}-\epsilon\|_2^2
    \right],
    \qquad
    \omega_k=\frac{b_k^2}{2\nu_k^2}.
    \label{eq:diffusion_dpo_noise_score}
\end{align}
The noise predictors are evaluated on the same $A_k$, condition $x_t$, and step $k$.
For a Gaussian endpoint decoder with shared fixed variance, the $k=1$ log density ratio likewise reduces to a difference of reconstruction errors, which can be expressed as noise errors with its corresponding weight.
Substituting these scores into Equation~\ref{eq:diffusion_dpo_step_bound} yields the denoising error surrogate: the preferred action should have a smaller error relative to the reference model than the rejected action.

\paragraph{5. From denoising to flow matching.}
To adapt this surrogate to our velocity predictor, we follow \citet{flowdpo} and use the linear interpolation from Equation~\ref{eq:action_interpolation}:
\begin{equation}
    A'_\tau=(1-\tau)A+\tau\epsilon,
    \qquad
    v_\tau=\epsilon-A,
    \qquad
    \epsilon\sim\mathcal{N}(0,\mathbf{I}).
    \label{eq:flow_dpo_linear_path}
\end{equation}
Since $\epsilon=A'_\tau+(1-\tau)v_\tau$, the velocity predictor induces a noise predictor through
\begin{equation}
    \hat\epsilon_\phi
    :=
    A'_\tau+(1-\tau)\hat v_\phi(A'_\tau,x_t,\tau),
    \qquad
    \phi\in\{\theta,\mathrm{ref}\}.
    \label{eq:flow_dpo_noise_predictor}
\end{equation}
Subtracting the true noise gives the exact error conversion
\begin{equation}
    \|\hat\epsilon_\phi-\epsilon\|_2^2
    =
    (1-\tau)^2
    \|\hat v_\phi(A'_\tau,x_t,\tau)-v_\tau\|_2^2.
    \label{eq:flow_dpo_error_conversion}
\end{equation}
Thus, the denoising surrogate becomes a preference loss on velocity errors with a weight that depends on $\tau$.
Following the constant weighting used by Flow-DPO, we replace the resulting weight by a constant.
This is a choice of training objective, separate from the exact error conversion above.

\paragraph{6. DemoDPO objective.}
We now use the normalization and score from Equations~\ref{eq:demodpo_flow_error} and~\ref{eq:demodpo_ratio_score}:
\begin{align}
    \bar\ell_\phi(A\mid x_t;\tau,\epsilon)
    &:=
    \frac{1}{H_{\mathrm a}D_a}
    \left\|
        \hat v_\phi(A'_\tau,x_t,\tau)-(\epsilon-A)
    \right\|_2^2,
    \label{eq:flow_dpo_single_time_error}\\
    s_\theta(A\mid x_t;\tau,\epsilon)
    &:=
    -\frac12
    \left[
        \bar\ell_\theta(A\mid x_t;\tau,\epsilon)
        -
        \bar\ell_{\mathrm{ref}}(A\mid x_t;\tau,\epsilon)
    \right].
    \label{eq:flow_dpo_sampled_ratio}
\end{align}
All constant scale factors are absorbed into $\beta$, with $1/2$ kept explicit to match the main text.
For each candidate pair, we sample $\tau\sim\operatorname{Unif}[0,1]$ and a shared $\epsilon\sim\mathcal{N}(0,\mathbf{I})$, independently of the noises used to generate the candidates.
We form $A_\tau^{\prime,s}=(1-\tau)A^s+\tau\epsilon$ for $s\in\{w,l\}$.
Using demonstration MSE to rank candidates and retaining pairs with $M_{\mathrm{pair}}=1$ gives
\begin{equation}
    \mathcal{L}_{\mathrm{DemoDPO}}
    =
    -\mathbb{E}_{\mathcal{D},\tau,\epsilon}
    \left[
        \log\sigma\left(
            \beta\left[
                s_\theta(A^w\mid x_t;\tau,\epsilon)
                -
                s_\theta(A^l\mid x_t;\tau,\epsilon)
            \right]
        \right)
        \,\middle|\,
        M_{\mathrm{pair}}=1
    \right].
    \label{eq:flow_dpo_final}
\end{equation}
This is Equation~\ref{eq:demodpo_objective}: the sampled preference loss uses velocity errors, a frozen reference policy, and demonstration rankings, without evaluating action likelihoods or complete generation paths.

\subsection{Training Algorithm}

Algorithm~\ref{alg:demodpo} summarizes one DemoDPO update.
Sampling and candidate selection are performed independently for each training example.
\texttt{mse} averages over valid action entries while retaining the batch dimension and, for candidate ranking, the candidate dimension.
\texttt{M} denotes $M_{\mathrm{pair}}$. \texttt{sample\_time()} draws $\tau\sim\operatorname{Unif}[0,1]$ independently for each example, with the same $\tau$ and $\epsilon$ shared by both candidates.

\begin{algorithm}[H]
\caption{DemoDPO with Demonstration-Guided Group Ranking}
\label{alg:demodpo}
\begin{lstlisting}[
  style=demodpo,
  basicstyle=\fontencoding{T1}\fontfamily{lmtt}\fontsize{10pt}{11pt}\selectfont,
  keywordstyle=\color{black},
  deletekeywords={sum,min},
  emph={fn,ref,stopgrad,sample,mse,argmin,argmax,sample_time,randn_like,logsigmoid,sum,clamp},
  literate=*
    {*}{{{\color{codesign}*}}}{1}
    {-}{{{\color{codesign}-}}}{1}
    {+}{{{\color{codesign}+}}}{1}
    {/}{{{\color{codesign}/}}}{1}
    {>=}{{{\color{codesign}>=}}}{2}
]
# fn, ref: velocity predictor and frozen reference
# x, A_demo: condition and demonstrated actions
# G, pair_epsilon, beta: candidate count, pair threshold, DPO coefficient

A = stopgrad(sample(ref, x, G))         # G candidates per condition
d = mse(A, A_demo)                      # demonstration error
w, l = argmin(d), argmax(d)
M = d[l] - d[w] >= pair_epsilon         # retain distinct pairs
A_w, A_l = A[w], A[l]

tau = sample_time()
eps = randn_like(A_demo)                # shared by both candidates
z_w = (1 - tau) * A_w + tau * eps
z_l = (1 - tau) * A_l + tau * eps
v_w, v_l = eps - A_w, eps - A_l

err_w = mse(fn(z_w, x, tau), v_w)
err_l = mse(fn(z_l, x, tau), v_l)
ref_w = mse(ref(z_w, x, tau), v_w)
ref_l = mse(ref(z_l, x, tau), v_l)

s_w = -0.5 * (err_w - stopgrad(ref_w))  # relative preference score
s_l = -0.5 * (err_l - stopgrad(ref_l))
loss = -logsigmoid(beta * (s_w - s_l))
loss = (loss * M).sum() / M.sum().clamp(min=1)
\end{lstlisting}
\end{algorithm}

\section{Implementation Details}
\label{app:implementation_details}
\label{app:additional_experimental_results}

We provide the training sample construction, frozen encoder comparison, evaluation protocols, visual representation analysis, and per task RoboTwin results.

\subsection{Training Sample Construction}
\label{app:training_sample_construction}

Following Fast-WAM~\citep{fast-wam}, we concatenate images from the head and two wrist cameras into a single image before feeding it to the vision encoder.
Because our study aims to isolate the effect of visual representations, we instantiate the task context $c_t$ as a discrete task ID and use the same conditioning across all controlled comparisons, rather than natural language instructions.
At control step $t$, each training sample contains the current visual observation $X_t$ and an action chunk $A_t=(a_t,\ldots,a_{t+H_{\mathrm a}-1})$, where $a_t\in\mathbb{R}^{D_a}$ and $D_a$ is the action dimension.
During training, we randomly sample one future offset $\delta\in\{1,\ldots,H_{\mathrm a}\}$ and use the corresponding observation $X_{t+\delta}$ for future embedding prediction.
Thus, we use $N=1$ future target per training sample.
Sampling offsets across training provides supervision over the prediction horizon while limiting the number of future query tokens.
Empirically, increasing $N$ provides limited performance gains while introducing more future query tokens and slowing training.
All LeWAM variants are trained only on the RoboTwin demonstrations described in Section~\ref{ssec:exp_setup}, while retaining the original visual pretraining of the frozen I-JEPA encoder.

\subsection{Frozen Encoder Comparison}
\label{app:frozen_encoder_ablation_details}

The goal of this comparison is to select a frozen visual encoder for LeWAM under a fixed downstream action predictor.
We compare representative checkpoints from each encoder family, with different architectures and parameter counts.
We first establish two representative reference points: the Wan2.2 VAE, which provides the compressed latent used by WAMs built on VDMs, and SigLIP~\citep{siglip}, a widely adopted visual encoder in VLAs.
We then compare MAE-Large~\citep{mae}, DINOv3-Large, and V-JEPA2-Large~\citep{vjepa2}.
Based on this comparison, we further evaluate V-JEPA2-Huge and I-JEPA-Huge~\citep{ijepa}.

For each encoder, we freeze all pretrained parameters and independently train the same lightweight DiT-style action predictor on RoboTwin.
Future embedding prediction is disabled so that the experiment evaluates each representation solely through action generation.
The action predictor architecture, current observations from multiple views, task conditioning, action horizon, optimizer, batch size, and training schedule are kept fixed across encoders.
Only the encoder-specific input projection and the downstream action predictor are trained.
\label{app:model_architecture}
All encoder variants use a 12-layer action predictor with a hidden size of 1280, and each encoder output is mapped to this width using a learned linear projection. After selecting I-JEPA-H/14, we use its frozen 0.6B-parameter encoder in LeWAM with an input resolution of $224 \times 224$ and a patch size of $14 \times 14$, producing 256 image tokens. The predictor and AdaFuse contain 402M trainable parameters, giving LeWAM 1.0B parameters in total.
Following DiT~\citep{DiT}, we use AdaLN-Zero to condition the predictor on noise timestep and task embeddings. Noisy action tokens use the sampled noise timestep, while visual tokens, clean action tokens, and future queries use the time embedding evaluated at zero. Before fusion, a shared LayerNorm independently normalizes each encoder layer's tokens across channels.

Each observation is processed with the input range, spatial resolution, and normalization for each channel expected by its pretrained encoder.
For consistency with the other encoders, we use only the current image, repeated 16 times following V-JEPA2's official single-image protocol~\citep{vjepa2}.
The complete action generation and probing results are reported in Table~\ref{tab:frozen_encoder_comparison} and discussed in RQ1.

\subsection{DemoDPO Ablation Evaluation Protocol}
\label{app:demodpo_ablation_details}

The DemoDPO hyperparameter ablations in Figure~\ref{fig:demodpo_ablations} use a fixed evaluation subset of eight challenging RoboTwin tasks to reduce evaluation cost.
All policies are trained on the complete RoboTwin 2.0 training set covering 50 tasks, comprising 2,500 clean and 25,000 randomized demonstrations, as described in Section~\ref{ssec:exp_setup}.
The subset is used only for evaluation and is not a training split by task.
For each task, we evaluate 100 clean and 100 randomized cases, resulting in 1,600 evaluation cases in total.
In alphabetical order, the task list is \texttt{blocks\_ranking\_size}, \texttt{hanging\_mug}, \texttt{open\_microwave}, \texttt{pick\_diverse\_bottles}, \texttt{place\_a2b\_right}, \texttt{place\_can\_basket}, \texttt{stamp\_seal}, and \texttt{turn\_switch}.

Under this evaluation protocol, the sweeps in Figure~\ref{fig:demodpo_ablations}, from \captiona{} to \captionc{}, vary $\varepsilon_{\mathrm{pair}}$ with $\beta=2000$ and $G=4$, $\beta$ with $\varepsilon_{\mathrm{pair}}=10^{-3}$ and $G=4$, and $G$ with $\varepsilon_{\mathrm{pair}}=10^{-3}$ and $\beta=5000$, respectively.

\subsection{Inference Efficiency}
\label{app:inference_efficiency}

We benchmark all models on an NVIDIA H800 with batch size 1 and 10 denoising steps.
Latency includes visual encoding, language encoding and KV-cache prefilling where applicable, and complete action denoising, with images re-encoded for every request.
CPU image preprocessing, tokenization, CPU--GPU transfers, robot execution, and one-time CUDA Graph capture are excluded.
We report peak PyTorch-allocated GPU memory with CUDA Graph enabled, including model parameters and graph-held tensors but excluding capture-phase peaks.

Table~\ref{tab:inference_efficiency} shows that LeWAM achieves the lowest latency with CUDA Graph disabled or enabled and the lowest peak memory among the evaluated models.

\begin{table}[htbp]
  \centering
  \small
  \caption{GPU inference efficiency. Peak allocated memory is measured with CUDA Graph enabled.}
  \label{tab:inference_efficiency}
  \setlength{\tabcolsep}{5pt}
  \renewcommand{\arraystretch}{1.1}
  \begin{tabular}{lrrrr}
    \hline
    Model & \multicolumn{2}{c}{Latency (ms) $\downarrow$} & Peak memory & Action chunk \\
    \cline{2-3}
    & CUDA Graph off & CUDA Graph on & (GiB) $\downarrow$ & length \\
    \hline
    $\pi_{0.5}$ (PyTorch) & 162.45 & 53.14 & 7.037 & 50 \\
    LingBot-VLA-v2 & 711.31 & 126.53 & 12.089 & 50 \\
    Fast-WAM & 325.55 & 102.38 & 23.198 & 32 \\
    \rowcolor{blue!8} LeWAM & \textbf{103.53} & \textbf{31.90} & \textbf{1.988} & 32 \\
    \hline
  \end{tabular}
\end{table}

\subsection{Visual Representation Analysis}
\label{app:representation_analysis}
\label{app:policy_attention}
\label{app:representation_probing}

\paragraph{Policy attention visualization.}
Attention scores are averaged across the action tokens, all attention heads, and all transformer layers at each inference step.
Figure~\ref{fig:policy_attention_tasks} presents attention maps across several manipulation tasks using the same aggregation protocol as Figure~\ref{fig:policy_attention}.
In Blocks Ranking RGB and Blocks Ranking Size, attention shifts between blocks as the policy grasps and places them in sequence.
Prominent responses appear around the active block, gripper, and placement region in both clean and cluttered scenes.
These observations suggest that action generation emphasizes object locations and spatial relationships, motivating the depth and segmentation probes below.

\begin{figure}[htpb]
    \centering
    \IfFileExists{figures/policy_attention_tasks.pdf}{%
        \includegraphics[width=\linewidth]{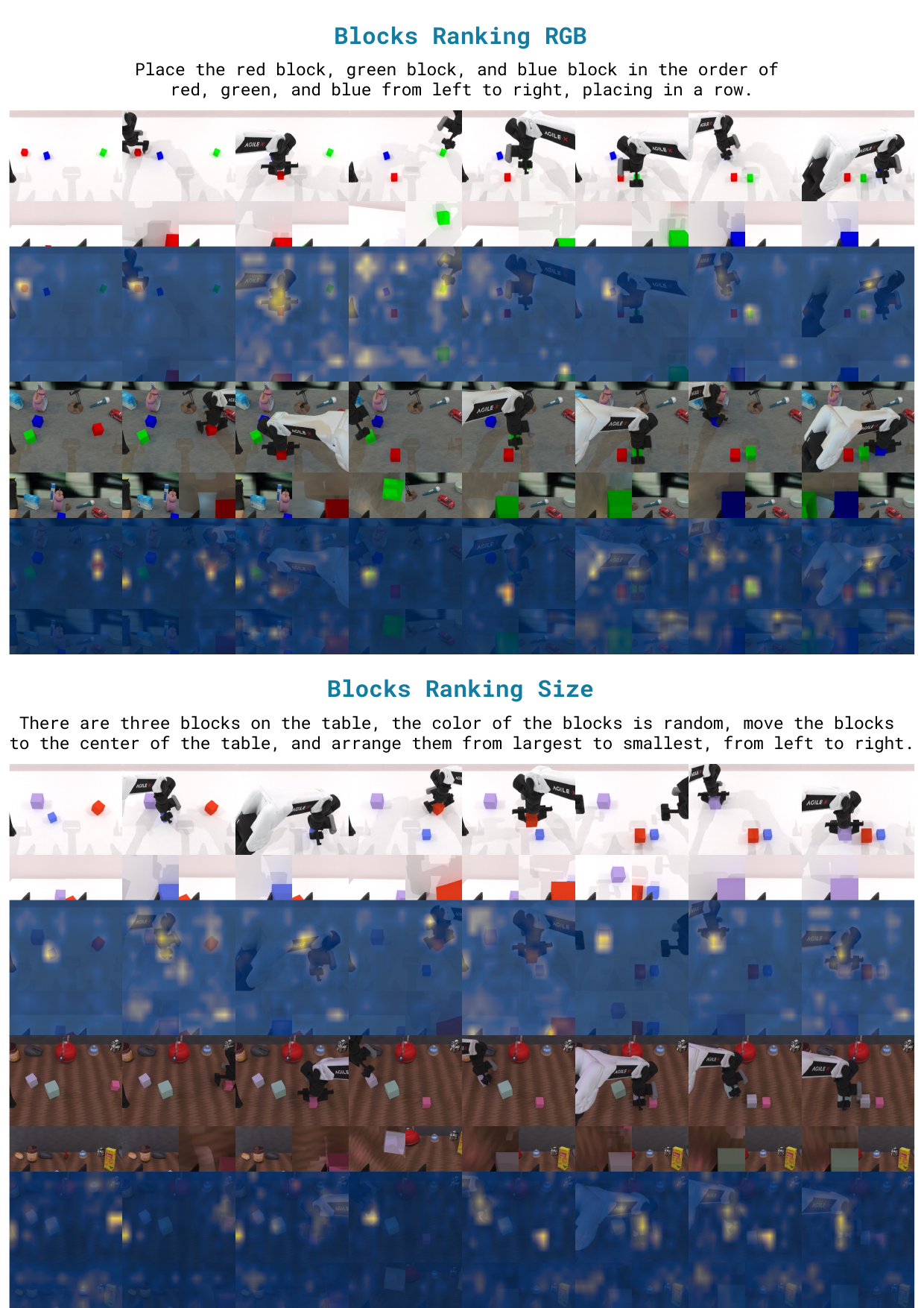}%
    }{%
        \fbox{\parbox[c][0.22\textheight][c]{0.96\linewidth}{\centering Policy attention visualization}}%
    }
    \caption{Policy attention visualizations under clean and randomized settings.
    Attention follows the active objects and interaction regions as the manipulation progresses.}
    \label{fig:policy_attention_tasks}
\end{figure}

Prior work probes pretrained visual encoders~\citep{zhu2023understanding,cae,ijepa,vjepa21}.
We analyze the information accessible from each frozen visual representation using depth and segmentation probes.
We randomly sample one timestep from each RoboTwin 2.0 training episode, obtaining 27,500 samples with synchronized front, left, and right views.
The samples are divided into training, validation, and test sets using an $8{:}1{:}1$ ratio.
All visual encoders remain frozen.
For AdaFuse, probing uses frozen fusion weights learned by LeWAM, while action success in Table~\ref{tab:frozen_encoder_comparison} uses action-only training.
Each probe applies LayerNorm, a linear projection to a width of 1280, and GELU to the frozen visual tokens.
The depth and segmentation mask heads operate on the spatial tokens, while the mask confidence head applies mean pooling before linear prediction.
Only the projection and probe heads are trained.

For depth probing, DA3MONO-LARGE~\citep{DA3} generates a relative depth target independently for each view.
The targets are standardized within each view and arranged in the same spatial layout as the visual input.
The depth probe is trained with mean squared error over valid pixels.
We evaluate depth prediction using scale and shift invariant root mean squared error (SSI-RMSE, shown as RMSE in Table~\ref{tab:frozen_encoder_comparison}) and Spearman rank correlation (Corr.) between the predicted and teacher depth values.

SAM3~\citep{SAM3} generates instance masks and confidence targets independently for each view.
Predicted masks are assigned to the teacher masks using Hungarian matching.
The training objective combines binary cross entropy (BCE) for mask and confidence prediction with Dice loss.
We report average precision at intersection over union (IoU) thresholds of 0.50 (AP@50) and 0.75 (AP@75).
A predicted mask is correct when its IoU with a teacher mask exceeds the threshold.

Table~\ref{tab:frozen_encoder_comparison} reports the probing results together with action success.
Among the individual encoders, I-JEPA-Huge performs best on both depth and segmentation.
The AdaFuse representation further improves all depth and segmentation metrics relative to the final I-JEPA layer.
Figure~\ref{fig:representation_probing_qualitative} presents representative depth and segmentation predictions.

\begin{figure}[p]
    \centering
    \includegraphics[width=\linewidth,height=0.92\textheight,keepaspectratio]{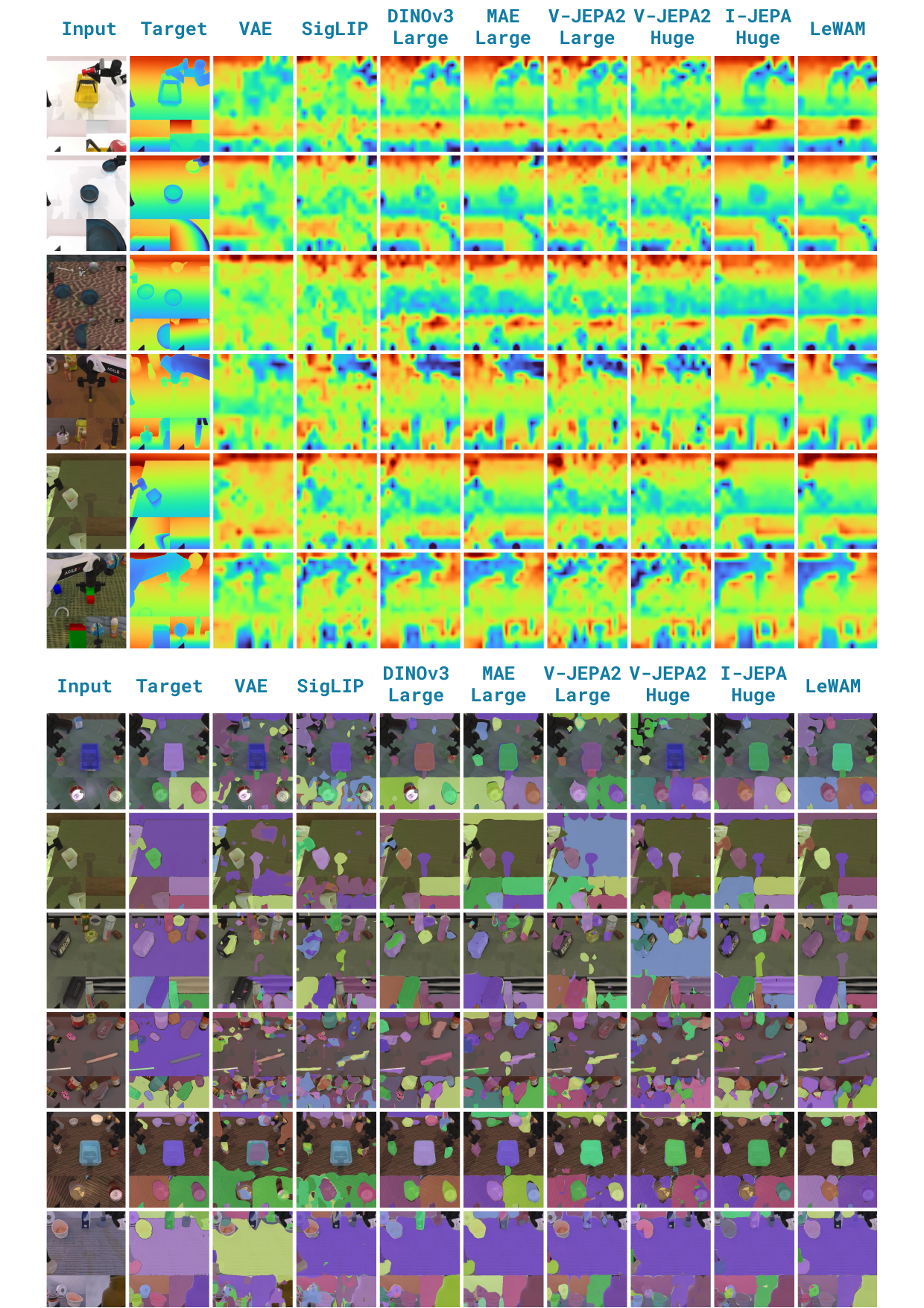}
    \caption{Depth and segmentation probing across visual representations.
    LeWAM denotes I-JEPA with learned AdaFuse weights.
    Depth maps share a fixed color scale, while segmentation colors distinguish local mask instances rather than semantic categories.
    }
    \label{fig:representation_probing_qualitative}
\end{figure}

\subsection{RoboTwin Per-Task Results}
\label{app:robotwin_per_task_results}

Table~\ref{tab:robotwin_per_task} reports success rates on all 50 RoboTwin 2.0 tasks under clean and randomized settings.
In Table~\ref{tab:robotwin_per_task}, results for GigaWorld-Policy and LaWAM are taken from their original papers~\citep{gigaworld-policy,lawam}, while results for all other external baselines are taken from~\citet{fast-wam}. 
LeWAM achieves an overall success rate of 92.28\%, with 93.14\% in clean settings and 91.42\% under randomization, matching the performance of leading VLA and WAM baselines.
\begin{table}[htpb]
    \centering
    \tiny
    \caption{Per-task success rates (\%) on the 50 RoboTwin 2.0 tasks under clean and randomized evaluation settings. Best results are shown in bold.}
    \label{tab:robotwin_per_task}
    \setlength{\tabcolsep}{2pt}
    \resizebox{\textwidth}{!}{%
        \begin{tabular}{lcc|cc|cc|cc|cc|cc|cc}
            \toprule
            \raisebox{0.55em}{Task} & \multicolumn{2}{c}{\raisebox{0.55em}{$\pi_{0.5}$}} & \multicolumn{2}{c}{\shortstack{GigaWorld-\\Policy}} & \multicolumn{2}{c}{\raisebox{0.55em}{Motus}} & \multicolumn{2}{c}{\raisebox{0.55em}{LaWAM}} & \multicolumn{2}{c}{\raisebox{0.55em}{Fast-WAM}} & \multicolumn{2}{c}{\raisebox{0.55em}{LingBot-VA}} & \multicolumn{2}{c}{\raisebox{0.55em}{LeWAM}} \\
            \cmidrule(lr){2-3}\cmidrule(lr){4-5}\cmidrule(lr){6-7}\cmidrule(lr){8-9}\cmidrule(lr){10-11}\cmidrule(lr){12-13}\cmidrule(lr){14-15}
                 & Clean & Rand. & Clean & Rand. & Clean & Rand. & Clean & Rand. & Clean & Rand. & Clean & Rand. & Clean & Rand. \\
            \midrule
            Adjust Bottle & \textbf{100} & 99 & \textbf{100} & \textbf{100} & 89 & 93 & \textbf{100} & \textbf{100} & \textbf{100} & \textbf{100} & 90 & 94 & \textbf{100} & \textbf{100} \\
            Beat Block Hammer & 96 & 93 & 86 & 86 & 95 & 88 & 90 & 93 & \textbf{99} & 97 & 96 & \textbf{98} & \textbf{99} & 97 \\
            Blocks Ranking RGB & 92 & 85 & 92 & 96 & 99 & 97 & 97 & \textbf{100} & \textbf{100} & \textbf{100} & 99 & 98 & \textbf{100} & 99 \\
            Blocks Ranking Size & 49 & 26 & 44 & 48 & 75 & 63 & 93 & 89 & \textbf{94} & \textbf{98} & \textbf{94} & 96 & 81 & 80 \\
            Click Alarmclock & 98 & 89 & \textbf{100} & \textbf{100} & \textbf{100} & \textbf{100} & \textbf{100} & \textbf{100} & \textbf{100} & \textbf{100} & 99 & \textbf{100} & \textbf{100} & 98 \\
            Click Bell & 99 & 66 & \textbf{100} & \textbf{100} & \textbf{100} & \textbf{100} & \textbf{100} & \textbf{100} & \textbf{100} & \textbf{100} & \textbf{100} & \textbf{100} & \textbf{100} & \textbf{100} \\
            Dump Bin Bigbin & 92 & 97 & 92 & \textbf{100} & 95 & 91 & \textbf{97} & 95 & \textbf{97} & 96 & 89 & 96 & \textbf{97} & 98 \\
            Grab Roller & \textbf{100} & \textbf{100} & \textbf{100} & \textbf{100} & \textbf{100} & \textbf{100} & \textbf{100} & \textbf{100} & \textbf{100} & \textbf{100} & \textbf{100} & \textbf{100} & \textbf{100} & \textbf{100} \\
            Handover Block & 66 & 57 & 80 & 80 & 86 & 73 & 96 & 87 & 95 & 81 & 99 & 78 & \textbf{100} & \textbf{93} \\
            Handover Mic & 98 & 97 & 72 & 72 & 78 & 63 & 93 & 98 & \textbf{99} & \textbf{100} & 94 & 96 & \textbf{99} & 99 \\
            Hanging Mug & 18 & 17 & 16 & 12 & 38 & 38 & 51 & 43 & 58 & \textbf{62} & 40 & 28 & \textbf{65} & 53 \\
            Lift Pot & 96 & 85 & 98 & 98 & 96 & 99 & \textbf{100} & 99 & \textbf{100} & \textbf{100} & \textbf{100} & 99 & \textbf{100} & \textbf{100} \\
            Move Can Pot & 51 & 55 & 76 & 78 & 34 & 74 & \textbf{98} & 93 & 90 & 88 & 94 & \textbf{97} & 93 & 93 \\
            Move Pillbottle Pad & 84 & 61 & 90 & 90 & 93 & 96 & 97 & 90 & \textbf{100} & \textbf{99} & 99 & \textbf{99} & 97 & 96 \\
            Move Playingcard Away & 96 & 84 & 78 & 72 & \textbf{100} & 96 & \textbf{100} & \textbf{100} & \textbf{100} & \textbf{100} & \textbf{100} & 99 & \textbf{100} & \textbf{100} \\
            Move Stapler Pad & 56 & 42 & 92 & 82 & 83 & 85 & \textbf{94} & \textbf{87} & 77 & 64 & 91 & 79 & 93 & 85 \\
            Open Laptop & 90 & 96 & 96 & 98 & 95 & 91 & \textbf{100} & \textbf{100} & 98 & \textbf{100} & 92 & 94 & 99 & \textbf{100} \\
            Open Microwave & 34 & 77 & 74 & 66 & \textbf{95} & \textbf{91} & 41 & 43 & 62 & 45 & 82 & 86 & 87 & 86 \\
            Pick Diverse Bottles & 81 & 71 & 82 & 70 & 90 & \textbf{91} & \textbf{91} & 88 & 80 & 85 & 89 & 82 & 87 & 81 \\
            Pick Dual Bottles & 93 & 63 & 86 & 86 & 96 & 90 & \textbf{100} & 95 & \textbf{100} & 96 & \textbf{100} & \textbf{99} & 99 & 94 \\
            Place A2B Left & 87 & 82 & 94 & 88 & 88 & 79 & \textbf{98} & 91 & 95 & \textbf{93} & 97 & \textbf{93} & 81 & 76 \\
            Place A2B Right & 87 & 84 & 90 & 92 & 91 & 87 & 89 & 94 & 93 & \textbf{99} & \textbf{97} & 95 & 85 & 80 \\
            Place Bread Basket & 77 & 64 & 82 & 82 & 91 & 94 & 92 & 85 & 91 & 93 & \textbf{97} & \textbf{95} & \textbf{97} & \textbf{95} \\
            Place Bread Skillet & 85 & 66 & 94 & 90 & 86 & 83 & 90 & 83 & 90 & \textbf{93} & \textbf{95} & 90 & 94 & 85 \\
            Place Burger Fries & 94 & 87 & 98 & 96 & 98 & 98 & 93 & 96 & 96 & \textbf{99} & 97 & 95 & \textbf{99} & 98 \\
            Place Can Basket & 62 & 62 & 78 & 74 & 81 & 76 & \textbf{92} & 65 & 71 & 69 & 81 & \textbf{84} & 82 & 67 \\
            Place Cans Plasticbox & 94 & 84 & \textbf{100} & \textbf{100} & 98 & 94 & \textbf{100} & 95 & 99 & 96 & \textbf{100} & 99 & \textbf{100} & 99 \\
            Place Container Plate & 99 & 95 & 98 & 96 & 98 & 99 & \textbf{100} & \textbf{100} & 96 & \textbf{100} & 99 & 97 & \textbf{100} & \textbf{100} \\
            Place Dual Shoes & 75 & 75 & 96 & 84 & 93 & 87 & \textbf{98} & \textbf{94} & 94 & 88 & 94 & 89 & 86 & 90 \\
            Place Empty Cup & \textbf{100} & 99 & 90 & 90 & 99 & 98 & 99 & \textbf{100} & \textbf{100} & \textbf{100} & \textbf{100} & \textbf{100} & \textbf{100} & \textbf{100} \\
            Place Fan & 87 & 85 & 92 & 94 & 91 & 87 & 92 & 93 & 96 & \textbf{96} & \textbf{99} & 93 & 96 & 94 \\
            Place Mouse Pad & 60 & 39 & 88 & 90 & 66 & 68 & 91 & 84 & 83 & 89 & \textbf{93} & \textbf{96} & \textbf{93} & 85 \\
            Place Object Basket & 80 & 76 & 90 & \textbf{92} & 81 & 87 & \textbf{92} & 90 & 89 & 88 & 91 & 88 & 89 & 90 \\
            Place Object Scale & 86 & 80 & 88 & 80 & 88 & 85 & 95 & 88 & 90 & \textbf{97} & \textbf{96} & 95 & 89 & 95 \\
            Place Object Stand & 91 & 85 & \textbf{100} & \textbf{98} & 98 & 97 & 92 & 93 & 90 & 94 & 99 & 96 & 98 & 96 \\
            Place Phone Stand & 81 & 81 & 82 & 72 & 87 & 86 & 93 & 94 & \textbf{97} & \textbf{99} & \textbf{97} & 97 & \textbf{97} & 98 \\
            Place Shoe & 92 & 93 & 98 & 96 & 99 & 97 & \textbf{100} & \textbf{100} & 96 & 99 & 98 & 98 & 99 & 97 \\
            Press Stapler & 87 & 83 & 96 & 96 & 93 & \textbf{98} & \textbf{98} & 97 & 90 & 97 & 85 & 82 & 83 & 85 \\
            Put Bottles Dustbin & 84 & 79 & 72 & 70 & 81 & 79 & 94 & \textbf{92} & \textbf{95} & 90 & 87 & 91 & 94 & 90 \\
            Put Object Cabinet & 80 & 79 & 74 & 74 & 88 & 71 & 90 & 82 & \textbf{94} & \textbf{89} & 85 & 87 & 91 & 83 \\
            Rotate QRcode & 89 & 87 & 90 & 84 & 89 & 73 & 94 & 89 & 93 & 89 & \textbf{96} & 91 & 93 & \textbf{92} \\
            Scan Object & 72 & 65 & 60 & 64 & 67 & 66 & \textbf{96} & 90 & 89 & \textbf{92} & \textbf{96} & 91 & 89 & 90 \\
            Shake Bottle & 99 & 97 & \textbf{100} & 98 & \textbf{100} & 97 & \textbf{100} & \textbf{100} & \textbf{100} & \textbf{100} & \textbf{100} & 97 & \textbf{100} & 99 \\
            Shake Bottle Horizontally & 99 & 99 & \textbf{100} & \textbf{100} & \textbf{100} & 98 & \textbf{100} & \textbf{100} & \textbf{100} & \textbf{100} & \textbf{100} & 99 & \textbf{100} & 98 \\
            Stack Blocks Three & 91 & 76 & 70 & 78 & 91 & 95 & 90 & 75 & 95 & 97 & 99 & \textbf{98} & \textbf{100} & 96 \\
            Stack Blocks Two & 97 & \textbf{100} & \textbf{100} & 94 & \textbf{100} & 98 & \textbf{100} & 97 & \textbf{100} & \textbf{100} & \textbf{100} & 98 & \textbf{100} & \textbf{100} \\
            Stack Bowls Three & 77 & 71 & 70 & 72 & 79 & \textbf{87} & \textbf{90} & 80 & 80 & 81 & 86 & 83 & 85 & 86 \\
            Stack Bowls Two & 95 & 96 & 96 & 92 & 98 & 98 & \textbf{100} & \textbf{99} & 92 & 98 & 94 & 98 & 99 & 95 \\
            Stamp Seal & 79 & 55 & \textbf{96} & \textbf{98} & 93 & 92 & 89 & 88 & 90 & 94 & \textbf{96} & 97 & 82 & 92 \\
            Turn Switch & 62 & 54 & 82 & \textbf{84} & \textbf{84} & 78 & 47 & 56 & 61 & 59 & 44 & 45 & 60 & 68 \\
            \midrule
            \textbf{Average} & 82.74 & 76.76 & 86.36 & 85.04 & 88.66 & 87.02 & 92.64 & 89.80 & 91.88 & \textbf{91.78} & 92.90 & 91.50 & \textbf{93.14} & 91.42 \\
            \textbf{Overall} & \multicolumn{2}{c|}{79.75} & \multicolumn{2}{c|}{85.70} & \multicolumn{2}{c|}{87.84} & \multicolumn{2}{c|}{91.22} & \multicolumn{2}{c|}{91.83} & \multicolumn{2}{c|}{92.20} & \multicolumn{2}{c}{\textbf{92.28}} \\
            \bottomrule
        \end{tabular}
    }
\end{table}

\subsection{LeWAM Per-Task Ablations}
\label{app:lewam_ablation_results}

Table~\ref{tab:lewam_ablation_per_task} reports per-task results for the LeWAM ablations.
Direct future embedding prediction provides a modest overall gain over the Action Only baseline, improving the success rate from 86.17\% to 87.02\%, while the flow matching variant reaches 86.21\%.
Adding AdaFuse produces a larger improvement to 90.69\%.
DemoDPO further raises the success rate to 92.28\%. Compared with the AdaFuse variant, it improves 25 clean and 24 randomized per-task results.
Relative to the Action Only baseline, the complete LeWAM improves 36 clean and 41 randomized per-task scores, showing that the gain is distributed across many tasks rather than concentrated in a few cases.
\begin{table}[htpb]
    \centering
    \tiny
    \caption{Per-task LeWAM ablations on RoboTwin 2.0. LeWAM uses I-JEPA-H/14 with 402M trainable parameters. WM (FM) uses flow matching for future embedding prediction. The remaining variants progressively add WM, AdaFuse, and DemoDPO. Best results are shown in bold.}
    \label{tab:lewam_ablation_per_task}
    \setlength{\tabcolsep}{3.5pt}
    \renewcommand{\arraystretch}{0.8}
    \resizebox{\textwidth}{!}{%
        \begin{tabular}{lcc|cc|cc|cc|cc}
            \toprule
            \raisebox{0.55em}{Task} & \multicolumn{2}{c}{\shortstack{Action\\Only}} & \multicolumn{2}{c}{\shortstack{+ WM\\(FM)}} & \multicolumn{2}{c}{\shortstack{+ WM}} & \multicolumn{2}{c}{\shortstack{+ WM\\+ AdaFuse}} & \multicolumn{2}{c}{\raisebox{0.55em}{LeWAM}} \\
            \cmidrule(lr){2-3}\cmidrule(lr){4-5}\cmidrule(lr){6-7}\cmidrule(lr){8-9}\cmidrule(lr){10-11}
                 & Clean & Rand. & Clean & Rand. & Clean & Rand. & Clean & Rand. & Clean & Rand. \\
            \midrule
            Adjust Bottle & 99 & \textbf{100} & \textbf{100} & 99 & \textbf{100} & 99 & \textbf{100} & \textbf{100} & \textbf{100} & \textbf{100} \\
            Beat Block Hammer & 98 & 92 & 98 & 94 & 96 & \textbf{97} & \textbf{99} & 94 & \textbf{99} & \textbf{97} \\
            Blocks Ranking RGB & 98 & 96 & 99 & 98 & 99 & 98 & \textbf{100} & \textbf{99} & \textbf{100} & \textbf{99} \\
            Blocks Ranking Size & 78 & 63 & 77 & 76 & 77 & 68 & 79 & \textbf{80} & \textbf{81} & \textbf{80} \\
            Click Alarmclock & \textbf{100} & 96 & \textbf{100} & 98 & \textbf{100} & \textbf{99} & \textbf{100} & 98 & \textbf{100} & 98 \\
            Click Bell & \textbf{100} & 99 & \textbf{100} & \textbf{100} & \textbf{100} & \textbf{100} & \textbf{100} & \textbf{100} & \textbf{100} & \textbf{100} \\
            Dump Bin Bigbin & 95 & 95 & \textbf{100} & 97 & 94 & 97 & 97 & 94 & 97 & \textbf{98} \\
            Grab Roller & 99 & 99 & 99 & \textbf{100} & \textbf{100} & 99 & \textbf{100} & \textbf{100} & \textbf{100} & \textbf{100} \\
            Handover Block & 96 & 82 & 78 & 78 & 93 & 72 & 96 & 81 & \textbf{100} & \textbf{93} \\
            Handover Mic & \textbf{99} & 98 & 98 & 98 & \textbf{99} & 94 & 98 & \textbf{99} & \textbf{99} & \textbf{99} \\
            Hanging Mug & \textbf{65} & 46 & 24 & 12 & 47 & 38 & 46 & \textbf{54} & \textbf{65} & 53 \\
            Lift Pot & \textbf{100} & 69 & \textbf{100} & 68 & 99 & 72 & \textbf{100} & \textbf{100} & \textbf{100} & \textbf{100} \\
            Move Can Pot & 79 & 75 & 88 & 86 & \textbf{98} & \textbf{94} & 89 & 92 & 93 & 93 \\
            Move Pillbottle Pad & 84 & 89 & 94 & 92 & 96 & 94 & \textbf{97} & \textbf{96} & \textbf{97} & \textbf{96} \\
            Move Playingcard Away & 99 & 99 & \textbf{100} & 98 & \textbf{100} & \textbf{100} & \textbf{100} & 99 & \textbf{100} & \textbf{100} \\
            Move Stapler Pad & 71 & 62 & 71 & 70 & 75 & 80 & \textbf{94} & \textbf{86} & 93 & 85 \\
            Open Laptop & 96 & \textbf{100} & 98 & 98 & 94 & \textbf{100} & 98 & \textbf{100} & \textbf{99} & \textbf{100} \\
            Open Microwave & 76 & 69 & \textbf{89} & \textbf{91} & 67 & 67 & 76 & 82 & 87 & 86 \\
            Pick Diverse Bottles & 72 & 75 & 66 & 68 & 73 & 66 & 83 & 78 & \textbf{87} & \textbf{81} \\
            Pick Dual Bottles & 98 & 91 & 94 & 90 & 91 & 92 & \textbf{99} & \textbf{94} & \textbf{99} & \textbf{94} \\
            Place A2B Left & 82 & 70 & 81 & 73 & \textbf{84} & \textbf{76} & 79 & 71 & 81 & \textbf{76} \\
            Place A2B Right & 76 & 63 & 81 & 65 & 79 & 67 & 83 & 79 & \textbf{85} & \textbf{80} \\
            Place Bread Basket & 91 & 90 & 90 & 86 & 88 & 93 & 92 & 92 & \textbf{97} & \textbf{95} \\
            Place Bread Skillet & 94 & 79 & 91 & 81 & 91 & 86 & \textbf{97} & \textbf{87} & 94 & 85 \\
            Place Burger Fries & 97 & 98 & 98 & 93 & 96 & 95 & \textbf{99} & \textbf{100} & \textbf{99} & 98 \\
            Place Can Basket & 46 & 53 & 73 & 61 & 73 & 61 & 75 & 62 & \textbf{82} & \textbf{67} \\
            Place Cans Plasticbox & 99 & 95 & \textbf{100} & \textbf{99} & 99 & 93 & \textbf{100} & 97 & \textbf{100} & \textbf{99} \\
            Place Container Plate & 99 & 94 & 97 & 94 & 99 & 93 & \textbf{100} & \textbf{100} & \textbf{100} & \textbf{100} \\
            Place Dual Shoes & 83 & 80 & \textbf{88} & 86 & 76 & 80 & 81 & 84 & 86 & \textbf{90} \\
            Place Empty Cup & \textbf{100} & \textbf{100} & \textbf{100} & 98 & \textbf{100} & 99 & \textbf{100} & \textbf{100} & \textbf{100} & \textbf{100} \\
            Place Fan & 87 & 86 & \textbf{97} & \textbf{94} & 96 & 89 & \textbf{97} & 90 & 96 & \textbf{94} \\
            Place Mouse Pad & 76 & 75 & 77 & 74 & 87 & \textbf{85} & 89 & 84 & \textbf{93} & \textbf{85} \\
            Place Object Basket & \textbf{94} & 83 & 79 & 77 & 80 & 74 & 90 & \textbf{90} & 89 & \textbf{90} \\
            Place Object Scale & 83 & 75 & 86 & 80 & 82 & 80 & 85 & 92 & \textbf{89} & \textbf{95} \\
            Place Object Stand & 85 & 93 & 92 & 90 & 92 & 90 & 94 & 95 & \textbf{98} & \textbf{96} \\
            Place Phone Stand & 96 & 84 & 94 & 81 & \textbf{97} & 82 & \textbf{97} & \textbf{98} & \textbf{97} & \textbf{98} \\
            Place Shoe & 93 & 93 & 93 & 92 & 94 & 94 & \textbf{99} & 94 & \textbf{99} & \textbf{97} \\
            Press Stapler & \textbf{83} & 79 & 80 & 75 & 80 & 83 & 79 & \textbf{85} & \textbf{83} & \textbf{85} \\
            Put Bottles Dustbin & 88 & 74 & 92 & 76 & 87 & 73 & \textbf{94} & 89 & \textbf{94} & \textbf{90} \\
            Put Object Cabinet & \textbf{92} & \textbf{84} & 81 & 81 & 81 & 80 & 90 & 77 & 91 & 83 \\
            Rotate QRcode & 94 & 90 & \textbf{96} & \textbf{93} & 84 & 90 & 93 & 87 & 93 & 92 \\
            Scan Object & 88 & 73 & 87 & 75 & 88 & 72 & 87 & 87 & \textbf{89} & \textbf{90} \\
            Shake Bottle & 99 & \textbf{100} & 99 & \textbf{100} & 99 & \textbf{100} & 99 & 98 & \textbf{100} & 99 \\
            Shake Bottle Horizontally & \textbf{100} & \textbf{100} & \textbf{100} & \textbf{100} & 99 & \textbf{100} & 99 & 99 & \textbf{100} & 98 \\
            Stack Blocks Three & 96 & 95 & 99 & 91 & \textbf{100} & \textbf{99} & 99 & 97 & \textbf{100} & 96 \\
            Stack Blocks Two & 99 & 99 & \textbf{100} & \textbf{100} & \textbf{100} & 98 & \textbf{100} & \textbf{100} & \textbf{100} & \textbf{100} \\
            Stack Bowls Three & 83 & 78 & 78 & 79 & 81 & 70 & 78 & \textbf{89} & \textbf{85} & 86 \\
            Stack Bowls Two & 97 & 95 & 97 & 90 & 98 & 95 & 98 & \textbf{97} & \textbf{99} & 95 \\
            Stamp Seal & 53 & 53 & 53 & 48 & 77 & 66 & 79 & 86 & \textbf{82} & \textbf{92} \\
            Turn Switch & \textbf{66} & 70 & 57 & 69 & 58 & 70 & 63 & \textbf{71} & 60 & 68 \\
            \midrule
            \textbf{Average} & 88.42 & 83.92 & 88.18 & 84.24 & 88.86 & 85.18 & 91.32 & 90.06 & \textbf{93.14} & \textbf{91.42} \\
            \textbf{Overall} & \multicolumn{2}{c|}{86.17} & \multicolumn{2}{c|}{86.21} & \multicolumn{2}{c|}{87.02} & \multicolumn{2}{c|}{90.69} & \multicolumn{2}{c}{\textbf{92.28}} \\
            \bottomrule
        \end{tabular}
    }
\end{table}

\section{Real-World Experiments}
\label{app:real_world_protocol}

\paragraph{Data collection.}
We conduct the real-world experiments on an AgileX dual arm platform consisting of two Piper manipulators.
Demonstrations are collected through leader-follower teleoperation.
The policy observes synchronized RGB images from a fixed front camera and cameras mounted on the left and right robot arms.
We collect 400 episodes for each of Stack Blocks, Fold Towel, and Arrange Flowers, resulting in 1,200 episodes and approximately 11 hours of data collected in house.
Both observations and actions are recorded at 30 Hz.
The initial object positions are varied across episodes.
All demonstrations use delta joint positions as the action representation.

\paragraph{Policy training.}
All evaluated methods are trained on the same demonstrations for 20,000 optimization steps.
The baselines are finetuned from their publicly available pretrained checkpoints.
We use AdamW with a learning rate of $5\times10^{-5}$ and a batch size of 256 for all methods.
Each policy predicts an action chunk of 50 steps.
We use the checkpoints after 20,000 supervised optimization steps for baseline evaluation.
For LeWAM, we further apply DemoDPO for 1,000 updates using the same hyperparameters as in Section~\ref{ssec:exp_setup}, including $G=4$ and $\beta=5000$, and evaluate the refined policy.

\paragraph{Robot evaluation.}
All policies are deployed on an NVIDIA A100 GPU with 40 GB of memory and executed at 30 Hz.
Before each rollout, both robot arms are returned to the same home configuration.
The objects are then placed in the initial configuration sampled for that rollout.
Each method is evaluated for 50 rollouts on each of the three tasks.
Each rollout is capped at three minutes.

\paragraph{Task progress.}
The three real-world tasks contain several sequential manipulation stages.
We measure task progress by scoring each recorded rollout according to completed milestones.

Let $s_{m,q,i}$ denote the score of rollout $i$ produced by method $m$ on task $q$, and let $M_q$ denote the maximum score for that task.
We report task progress as
\begin{equation}
    \operatorname{Progress}(m,q)
    =
    \frac{100}{N_q M_q}
    \sum_{i=1}^{N_q} s_{m,q,i},
    \label{eq:real_world_task_progress}
\end{equation}
where $N_q=50$.
Task progress is the average rollout score as a percentage of the maximum score.

Each milestone contributes one point and can be counted at most once in a rollout.
Task dependent penalties are applied after adding the milestone points.
The resulting score is clipped to the interval $[0,M_q]$.
All methods are evaluated from their recorded rollouts using the same scoring rules.

\paragraph{Stack Blocks ($M_q=7$).}
The objective is to form a three block stack with the red block at the bottom, the green block in the middle, and the blue block on top.
The rollout receives one point for each of the following milestones:
\begin{enumerate}
    \item The robot lifts the red block and moves it toward the target location.
    \item The red block is placed stably.
    \item The robot lifts the green block and moves it toward the red block.
    \item The green block is placed stably.
    \item The robot lifts the blue block and moves it toward the green block.
    \item The blue block is placed stably.
\end{enumerate}
The six manipulation milestones contribute at most six points.
The final configuration receives one additional point when the stack is stable and neatly aligned, and no additional point when it is stable but untidy.
Two points are deducted if the final structure collapses.
One point is deducted if the structure collapses during execution and the robot does not continue the task.

\paragraph{Fold Towel ($M_q=6$).}
The objective is to fold a towel and hang it on a rack.
The rollout receives one point for each of the following milestones:
\begin{enumerate}
    \item Most of the towel is unfolded.
    \item The towel is fully flattened.
    \item The towel is folded in half.
    \item The robot adjusts the orientation of the folded towel.
    \item The robot lifts the towel and moves it toward the rack.
    \item The towel is placed on the rack.
\end{enumerate}
No adjustment is applied when the towel remains stably and neatly placed on the rack.
One point is deducted if the towel falls from the rack or its final placement is irregular.

\paragraph{Arrange Flowers ($M_q=6$).}
The objective is to insert three flowers into a vase while keeping the vase upright.
For each flower, one point is awarded when the robot lifts it and moves it toward the vase, and another point is awarded when it is inserted successfully.
The three flowers therefore contribute at most six points.
If the vase is knocked over and the robot does not restore it, the rollout terminates and one point is deducted.

\subsection{Qualitative Comparisons}

DemoDPO raises task progress from 81.1\%, 58.3\%, and 50.7\% to 84.9\%, 60.7\%, and 53.0\% on Stack Blocks, Fold Towel, and Arrange Flowers, respectively, with example rollouts in Figure~\ref{fig:real_world_demodpo_comparison}.

\begin{figure}[!htb]
    \centering
    \includegraphics[width=\linewidth]{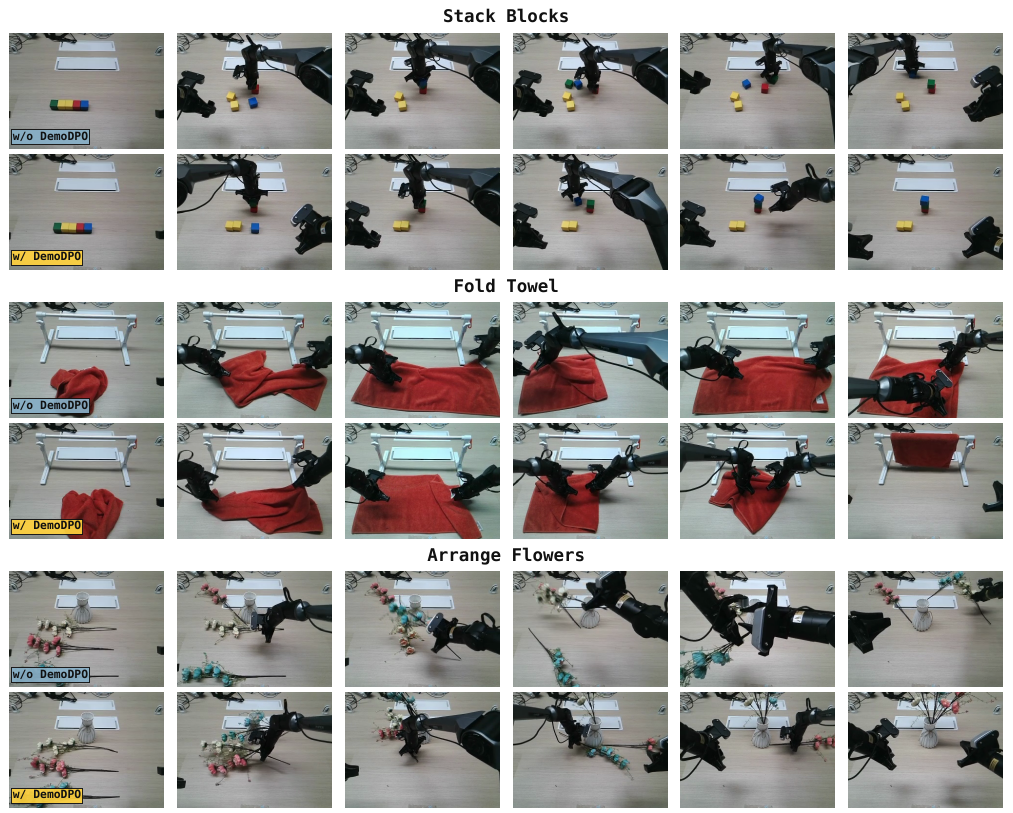}
    \vspace{-1.5em}
    \caption{Qualitative effect of DemoDPO on the three real-world tasks.
    For each task, the two rows compare LeWAM before and after DemoDPO refinement under the same initial configuration.}
    \label{fig:real_world_demodpo_comparison}
\end{figure}

Figure~\ref{fig:real_world_comparison_grid} compares representative rollouts for all four methods on the three real-world tasks.
These examples visualize the partial task completion captured by the task progress metric.

\begin{figure}[p]
    \centering
    \includegraphics[width=\linewidth,height=0.91\textheight,keepaspectratio]{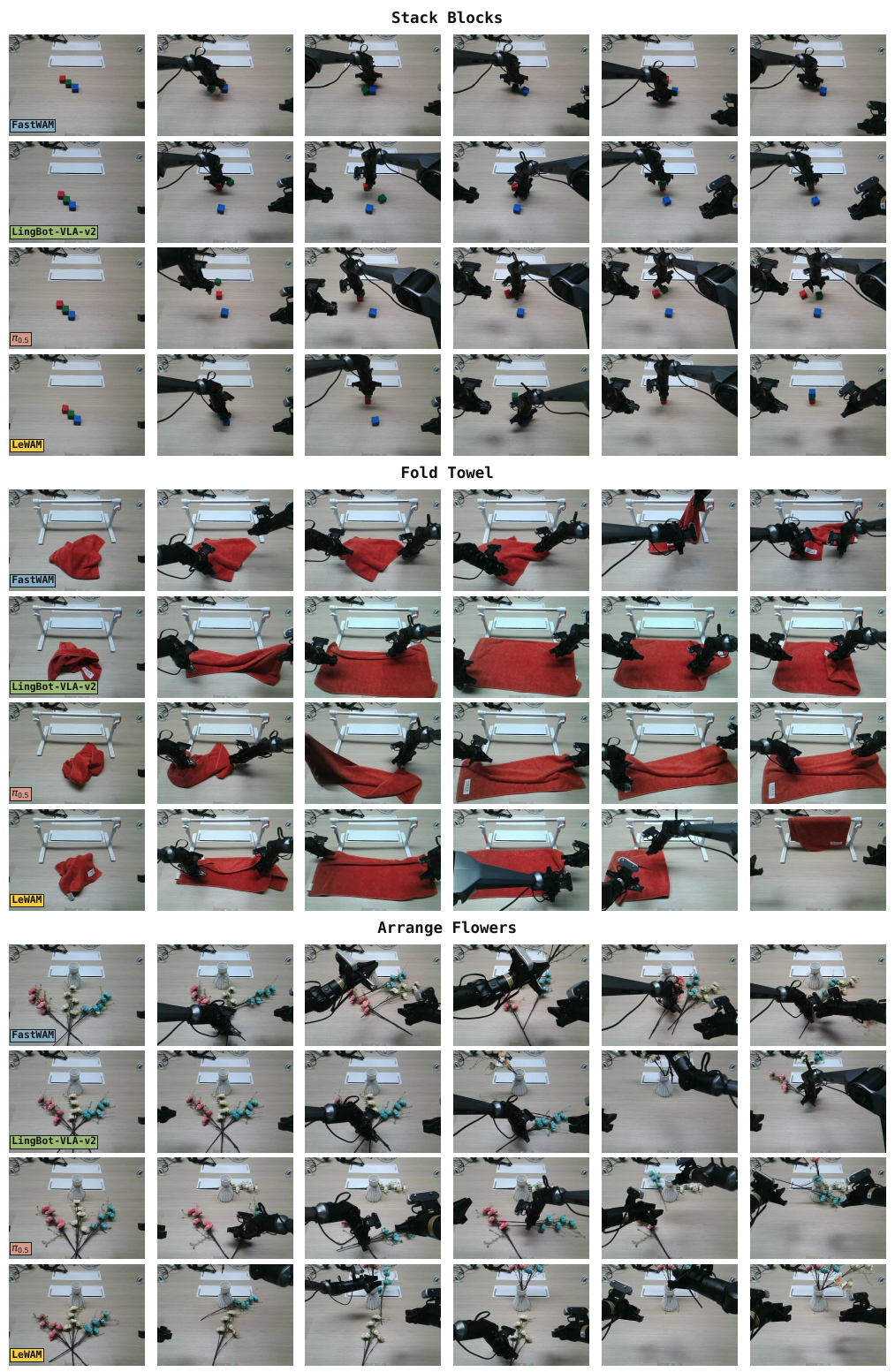}
    \caption{Representative real-world rollouts for Stack Blocks, Fold Towel, and Arrange Flowers.
    For each task, the sequences show task execution from the fixed front camera under the same initial configuration across methods.}
    \label{fig:real_world_comparison_grid}
\end{figure}

\end{document}